\apptocmd{\sloppy}{\hbadness 10000\relax}{}{}
 \def\@textbottom{\vskip \z@ \@plus 20pt}
 \let\@texttop\relax
\title{Random Fourier Features for Operator-Valued Kernels}
\author{
{\bf Romain Brault}\thanks{\url{ro.brault@telecom-paristech.fr}} \\
IBISC. \\
Universit\'e d'\'Evry val d'Essonne,\\
15 Boulevard de France, \'Evry, 91000, France.
\\ $\And$ \\
{\bf Florence d'Alch\'e-Buc}\thanks{\url{florence.dalche@telecom-paristech.fr}} \\
LTCI. \\
T\'el\'ecom ParisTech.\\
Universit\'e Paris-Saclay, \\
46 rue Barrault, Paris, 75684 cedex 13, France.
\\ $\And$ \\
{\bf Markus Heinonen}\thanks{\url{markus.o.heinonen@aalto.fi}} \\
Department of Information and Computer Science. \\
Aalto University,\\
FI-00076 Aalto, PO Box 15400, Finland.
}
\begin{document}

\maketitle

\begin{abstract}
Devoted to multi-task learning and structured output learning, operator-valued kernels provide a flexible tool to build vector-valued functions in the context of Reproducing Kernel Hilbert Spaces. To scale up these methods, we extend the celebrated Random Fourier Feature methodology to get an approximation of operator-valued kernels. We propose a general principle for Operator-valued Random Fourier Feature construction relying on a generalization of Bochner's theorem for translation-invariant operator-valued Mercer kernels. We prove the uniform convergence of the kernel approximation for bounded and unbounded operator random Fourier features using appropriate Bernstein matrix concentration inequality. An experimental proof-of-concept shows the quality of the approximation and the efficiency of the corresponding linear models on example datasets.
\end{abstract}

\section{Introduction}
Multi-task regression \citep{Micchelli2005}, structured classification \citep{Dinuzzo2011}, vector field learning \citep{Baldassare2012} and vector autoregression \citep{Sindhwani2013,Lim2015} are all learning problems that boil down to learning a vector while taking into account an appropriate output structure. A $p$-dimensional vector-valued model can account for couplings between the outputs for improved performance in comparison to $p$ independent scalar-valued models.
%Compared to $p$ independent scalar-valued models, a $p$-dimensional vector-valued model may account for couplings between outputs allowing for performance improvement.
In this paper we are interested in a general and flexible approach to efficiently implement and learn vector-valued functions, while allowing couplings between the outputs. To achieve this goal, we turn to shallow architectures, namely the product of a (nonlinear) feature matrix $\tilde{\Phi}(x)$ and a parameter vector $\param$ such that $\tilde{f}(x) = \tilde{\Phi}(x)^* \param$, and combine two appealing methodologies: Operator-Valued Kernel Regression and Random Fourier Features.
 \par
Operator-Valued Kernels \citep{Micchelli2005,Carmeli2010,Alvarez2012} extend the classic scalar-valued kernels to vector-valued functions. As in the scalar case, operator-valued kernels (OVKs) are used to build Reproducing Kernel Hilbert Spaces (RKHS) in which representer theorems apply as for ridge regression or other appropriate loss functional. In these cases, learning a model in the RKHS boils down to learning a function of the form $f(x)=\sum_{i=1}^n K(x,x_i)\alpha_i$ where $x_1, \ldots, x_n$ are the training input data and each $\alpha_i, i=1, \ldots, n$ is a vector of the output space $\bmY$ and each $K(x,x_i)$, an operator on vectors of $\bmY$.
However, OVKs suffer from the same drawback as classic kernel machines: they scale poorly to very large datasets because they are very demanding in terms of memory and computation. Therefore, focusing on the case $\bmY=\mathbb{R}^p$, we propose to approximate OVKs by extending a methodology called Random Fourier Features (RFFs) \citep{Rahimi2007, Le2013, Alacarte, sriper2015, Bach2015, sutherland2015} so far developed to speed up scalar-valued kernel machines. The RFF approach linearizes a shift-invariant kernel model by generating explicitly an approximated feature map $\tilde{\phi}$. RFFs has been shown to be efficient on large datasets and further improved by efficient matrix computations of FastFood \citep{Le2013}, and is considered as one of the best large scale implementations of kernel methods, along with N\"ystrom approaches \citep{Yang2012}.
 \par
In this paper, we propose general Random Fourier Features for functions in vector-valued RKHS. Here are our contributions: (1) we define a general form of Operator Random Fourier Feature (ORFF) map for shift-invariant operator-valued kernels, (2) we construct explicit operator feature maps for a simple bounded kernel, the decomposable kernel, and more complex unbounded kernels curl-free and divergence-free kernels, (3) the corresponding kernel approximation is shown to uniformly converge towards the target kernel using appropriate Bernstein matrix concentration inequality, for both bounded and unbounded operator-valued kernels and (4) we illustrate the theoretical approach by a few numerical results.
 \par
The paper is organized as follows. In \cref{sec:background}, we recall Random Fourier Feature and Operator-valued kernels. In \cref{sec:orff}, we use extension of Bochner's theorem to propose a general principle of Operator Random Fourier Features and provide examples for decomposable, curl-free and divergence-free kernels.
In \cref{sec:concentration}, we present a theorem of uniform convergence for bounded and unbounded ORFFs (proof is given in \cref{subsec:concentration_proof}) and the conditions of its application. \Cref{sec:learning_with_ORFF} shows an numerical illustration on learning linear ORFF-models. \Cref{sec:conclusion} concludes the paper. The main proofs of the paper are presented in Appendix.

\subsection{Notations}
The euclidean inner product in $\mathbb{R}^d$ is denoted $\inner{\cdot, \cdot}$ and the euclidean norm is denoted $\norm{\cdot}$. The unit pure imaginary number $\sqrt{-1}$ is denoted $i$.
$\mathcal{B}(\mathbb{R}^d)$ is the Borel $\sigma$-algebra on $\mathbb{R}^d$.
For a function $f: \mathbb{R}^d \rightarrow \mathbb{R}$, if $dx$ is the Lebesgue measure on $\mathbb{R}^d$, we denote $\FT{f}$ its Fourier transform defined by:
\begin{equation*}
    \forall x \in \mathbb{R}^d, \FT{f}(x)=\int_{\mathbb{R}^d} e^{-i \inner{\omega,x}}f(\omega)d\omega.
\end{equation*}
The inverse Fourier transform of a function $g$ is defined as
\begin{equation*}
    \IFT{g}(\omega)= \int_{\mathbb{R}^d} e^{i \inner{x, \omega}}f(\omega)dx.
\end{equation*}
It is common to define the Fourier transform of a (positive) measure $\mu$ by
\begin{equation*}
    \FT{\mu}(x) = \int_{\mathbb{R}^d} e^{-i \inner{\omega,x}} d\mu(\omega).
\end{equation*}
If $\bmX$ and $\bmY$ are two vector spaces, we denote by $\mathcal{F}(\mathcal{X};\mathcal{Y})$ the vector space of functions $f:\mathcal{X}\to\mathcal{Y}$ and $\mathcal{C}(\mathcal{X};\mathcal{Y})\subset\mathcal{F}(\mathcal{X};\mathcal{Y})$ the subspace of continuous functions. If $\mathcal{H}$ is an Hilbert space we denote its scalar product by $\inner{.,.}_\mathcal{H}$ and its norm by $\norm{.}_\mathcal{H}$. We set $\mathcal{L}(\mathcal{H})=\mathcal{L}(\mathcal{H};\mathcal{H})$ to be the space of linear operators from $\mathcal{H}$ to itself. If $W\in\mathcal{L}(\mathcal{H})$, $\Ker W$ denotes the nullspace, $\Ima W$ the image and $W^\adjoint \in \mathcal{L}(\mathcal{H})$ the adjoint operator (transpose in the real case).
\subsection{Background} \label{sec:background}
\paragraph{Random Fourier Features:}
we first consider scalar-valued functions. Denote $k: \mathbb{R}^d \times \mathbb{R}^d \rightarrow \mathbb{R}$ a positive definite kernel on $\mathbb{R}^d$. A kernel $k$ is said to be \emph{shift-invariant} for the addition if for any $a \in \mathbb{R}^d$, $
\forall (x,x') \in \mathbb{R}^d \times \mathbb{R}^d, k(x-a,z-a) = k(x,z)$.
Then, we define $k_0: \mathbb{R}^d \rightarrow \mathbb{R}$ the function such that $k(x,z)= k_0(x-z)$. $k_0$ is called the \emph{signature} of kernel $k$. Bochner theorem is the theoretical result that leads to the Random Fourier Features.
\begin{theorem}[Bochner's theorem\footnote{See \citet{Rudin}.}]\label{th:bochner-scalar}
Every positive definite complex valued function is the Fourier transform of a non-negative measure. This implies that any positive definite, continuous and shift-invariant kernel $k$ is the Fourier transform of a non-negative measure $\mu$:
\begin{equation}\label{bochner-scalar}
k(x,z)=k_0(x-z) = \int_{\mathbb{R}^d} e^{-i \inner{\omega,x - z}} d\mu(\omega).
\end{equation}
 \end{theorem}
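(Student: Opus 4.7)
The plan is to derive Bochner's theorem by transferring the pointwise positive-definiteness of $k_0$ into a positivity statement about its distributional Fourier transform, and then invoking the Riesz representation theorem to promote that distribution to a finite non-negative measure $\mu$.

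First, I would reformulate positive-definiteness at the level of Schwartz test functions: taking $c_j=\phi(x_j)$ with $\phi\in\mathcal{S}(\mathbb{R}^d)$ and passing from Riemann sums to integrals using the continuity and boundedness of $k_0$, the discrete inequality $\sum_{j,l}c_j\bar{c}_l\,k_0(x_j-x_l)\ge 0$ upgrades to
\begin{equation*}
\iint_{\mathbb{R}^d\times\mathbb{R}^d} k_0(x-y)\,\phi(x)\,\overline{\phi(y)}\,dx\,dy \;\ge\; 0.
\end{equation*}

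Second, I would rewrite this double integral via Plancherel and the convolution--product identity. Setting $\tilde\phi(x)=\overline{\phi(-x)}$, the integral equals $\langle k_0*\tilde\phi,\phi\rangle_{L^2}$, which in the distributional sense pairs as $\int \FT{k_0}(\omega)\,|\FT{\phi}(\omega)|^2\,d\omega$. Since the Fourier transform is a bijection of $\mathcal{S}(\mathbb{R}^d)$ and the cone generated by $\{|\FT{\phi}|^2:\phi\in\mathcal{S}\}$ is dense in the cone of non-negative Schwartz functions, I would conclude that $\FT{k_0}$ is a non-negative tempered distribution.

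Third, I would invoke the classical result that every non-negative tempered distribution is represented by a non-negative Radon measure, giving $\FT{k_0}=\mu$. Finiteness is extracted by testing against a Gaussian approximate identity $(G_\epsilon)_{\epsilon>0}$ with $\FT{G_\epsilon}\nearrow 1$: Fubini and monotone convergence give
\begin{equation*}
\mu(\mathbb{R}^d) \;=\; \lim_{\epsilon\to 0}\int_{\mathbb{R}^d} \FT{G_\epsilon}(\omega)\,d\mu(\omega) \;=\; \lim_{\epsilon\to 0}(k_0*G_\epsilon)(0) \;=\; k_0(0) \;<\; \infty.
\end{equation*}
Since $k_0$ is bounded and continuous with distributional Fourier transform equal to the finite measure $\mu$, Fourier inversion in the Schwartz sense gives the pointwise identity $k_0(u)=\int_{\mathbb{R}^d}e^{-i\inner{\omega,u}}\,d\mu(\omega)$, and substituting $u=x-z$ yields the claim.

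The main obstacle I expect is the density claim in the second step: passing from positivity of $\FT{k_0}$ on the proper subcone $\{|\FT{\phi}|^2\}$ to positivity on all non-negative test functions requires justification. A cleaner route is to first extend the positive linear functional $\phi\mapsto\langle\FT{k_0},\phi\rangle$ by continuity from $\mathcal{S}(\mathbb{R}^d)$ to $C_c(\mathbb{R}^d)$, using the uniform bound inherited from $\|k_0\|_\infty=k_0(0)$, so that the Riesz--Markov representation on the locally compact space $\mathbb{R}^d$ delivers $\mu$ directly without any intermediate density argument.
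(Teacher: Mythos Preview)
The paper does not supply a proof of this statement: Bochner's theorem is quoted as background with a footnote reference to Rudin, and is then used without further justification. There is therefore no ``paper's own proof'' against which to compare your proposal.

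That said, your outline is a correct and standard route to Bochner's theorem via tempered distributions. The three steps (passing to the integral positivity condition, identifying $\FT{k_0}$ as a non-negative distribution, then upgrading to a finite Radon measure and inverting) are all sound, and you have already flagged the one genuinely delicate point: the density of the cone generated by $\{|\FT{\phi}|^2:\phi\in\mathcal{S}\}$ inside non-negative test functions is not entirely trivial, since $\sqrt{\psi}$ need not be smooth even when $\psi\ge 0$ is. Your suggested workaround---extending the positive functional to $C_c(\mathbb{R}^d)$ using the bound $\|k_0\|_\infty=k_0(0)$ and invoking Riesz--Markov directly---is indeed the cleaner way to close this gap and matches the argument one finds in Rudin's \emph{Fourier Analysis on Groups}.
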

Without loss of generality for the Random Fourier methodology, we assume that $\mu$ is a probability measure, i.e. $\int_{\mathbb{R}^d} d\mu(\omega)=1$.
Then we can write \cref{bochner-scalar} as an expectation over $\mu$: $k_0(x-z) = \expectation_{\mu}\left[e^{-i \inner{\omega,x - z}}\right]$.
Both $k$ and $\mu$ are real-valued, and the imaginary part is null if and only if $\mu(\omega)= \mu(-\omega)$. We thus only write the real part:
\begin{equation*}
\begin{aligned}
%\label{bochner-scalar-cos}
k(x,z)&=\expectation_{\mu}[\cos \inner{\omega,x - z}]\label{eq:noyau-scal}\\
&=\expectation_{\mu}\left[ \cos \inner{\omega,z} \cos \inner{\omega,x} + \sin \inner{\omega,z} \sin \inner{\omega,x}\right].
\end{aligned}
\end{equation*}
Let $\Vect_{j=1}^D x_j$ denote the $Dm$-length column vector obtained by stacking vectors $x_j \in \mathbb{R}^m$. The feature map $\tilde{\phi}: \mathbb{R}^d \rightarrow \mathbb{R}^{2D}$ defined as
\begin{equation}\label{eq:rff}
\begin{aligned}
\tilde{\phi}(x)=\frac{1}{\sqrt{D}}\Vect_{j=1}^D\begin{pmatrix}\cos{\inner{x,\omega_j}} \\ \sin{\inner{x,\omega_j}}\end{pmatrix}, \enskip \omega_j \sim \mu
\end{aligned}
\end{equation}
is called a \emph{Random Fourier Feature map}. Each $\omega_{j}, j=1, \ldots, D$ is independently sampled from the inverse Fourier transform $\mu$ of $k_0$.
This Random Fourier Feature map provides the following Monte-Carlo estimator of the kernel:
\begin{equation}
\tilde{K}(x, z) = \tilde{\phi}(x)^* \tilde{\phi}(z),
\end{equation}
that is proven to uniformly converge towards the true kernel described in \cref{bochner-scalar}. The dimension $D$ governs the precision of this approximation whose uniform convergence towards the target kernel can be found in \citet{Rahimi2007} and in more recent papers with some refinements \citet{sutherland2015, sriper2015}.
Finally, it is important to notice that Random Fourier Feature approach \emph{only} requires two steps before learning: (1) define the inverse Fourier transform of the given shift-invariant kernel, (2) compute the randomized feature map using the spectral distribution $\mu$. For the Gaussian kernel $k(x-z) = exp(-\gamma \norm{x - z}^2)$, the spectral distribution $\mu(\omega)$ is Gaussian \citet{Rahimi2007}.
\paragraph{Operator-valued kernels:}
we now turn to vector-valued functions and consider vector-valued Reproducing Kernel Hilbert spaces (vv-RKHS) theory. The definitions are given for input space $\bmX \subset \mathbb{C}^d$ and output space $\bmY \subset \mathbb{C}^p$. We will define operator-valued kernel as reproducing kernels following the presentation of \citet{Carmeli2010}.
Given $\bmX$ and $\bmY$, a map $K:\bmX\times\bmX\to\bmL(\bmY)$ is called a $\bmY$-reproducing kernel if
\begin{equation*}
  \sum_{i,j=1}^N\inner{K(x_i,x_j)y_j,y_i}\ge0,
\end{equation*}
for all $x_1,\hdots,x_N$ in $\bmX$, all $y_1,\hdots,y_N$ in $\bmY$ and $N\ge1$. Given $x\in\bmX$, $K_x:\bmY\to\mathcal{F}(\bmX;\bmY)$ denotes the linear operator whose action on a vector $y$ is the function $K_xy\in\mathcal{F}(\bmX;\bmY)$ defined by
 $(K_x y)(z)=K(z,x)y, \enskip \forall z\in\bmX$.
 \par
 Additionally, given a $\bmY$-reproducing kernel $K$, there is a unique Hilbert space $\bmH_K\subset\mathcal{F}(\bmX;\bmY)$ satisfying $K_x\in\bmL(\bmY;\bmH_K), \enskip \forall x\in\bmX$ and $f(x)=K^\adjoint_x f, \enskip \forall x\in\bmX, \forall f\in\bmH_K$, where $K^\adjoint_x:\bmH_K\to\bmY$ is the adjoint of $K_x$.
The space $\bmH_K$ is called the \emph{(vector-valued) Reproducing Kernel Hilbert Space} associated with $K$. The corres\-ponding product and norm are denoted by $\inner{.,.}_K$ and $\norm{.}_K$, respectively. As a consequence \citep{Carmeli2010} we have:
\begin{equation*}
\begin{aligned}
K(x,z)&=K^\adjoint_x K_z \enskip\forall x,z\in\bmX \\
\bmH_K&=\lspan\left\{ K_x y \enskip\middle|\enskip \forall x\in\bmX,\enskip\forall y\in\bmY \right\}
\end{aligned}
\end{equation*}
Another way to describe functions of $\bmH_K$ consists in using a suitable feature map.
\begin{proposition}[\citet{Carmeli2010}]
\label{pr:feature_operator}
Let $\bmH$ be a Hilbert space and $\Phi:\bmX\to\mathcal{L}(\bmY;\bmH)$, with $\Phi_x\triangleq \Phi(x)$. Then the operator $W:\bmH\to\mathcal{F}(\bmX;\bmY)$ defined by $(W g)(x)=\Phi_x^\adjoint g, \enskip \forall g \in\bmH, \forall x\in\bmX$ is a partial isometry from $\bmH$ onto the reproducing kernel Hilbert space $\bmH_K$ with reproducing kernel
\begin{equation*}
K(x,z)=\Phi^\adjoint_x\Phi_z, \enskip \forall x, z\in\bmX.
\end{equation*}
$W^\adjoint W$ is the orthogonal projection onto \begin{equation*}
  \Ker W^\bot = \lspan\left\{ \Phi_x y \enskip\middle|\enskip \forall x\in\bmX,\enskip\forall y\in\bmY \right\}.
\end{equation*}
Then $\norm{f}_K=\inf\left\{ \norm{g}_\bmH \enskip\middle|\enskip \forall g \in\bmH,\enskip Wg=f \right\}$.
\end{proposition}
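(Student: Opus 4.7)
The plan is to build the proof in five stages, each addressing one claim in the proposition, and leveraging the uniqueness/existence of the vv-RKHS recalled just before the statement.

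First, I would verify that $K(x,z) = \Phi_x^\adjoint \Phi_z$ is indeed a $\bmY$-reproducing kernel. For any finite family $(x_i, y_i)_{i=1}^N$,
\begin{equation*}
\sum_{i,j=1}^N \inner{K(x_i,x_j) y_j, y_i}_\bmY = \sum_{i,j=1}^N \inner{\Phi_{x_j} y_j, \Phi_{x_i} y_i}_\bmH = \Bigl\| \sum_{i=1}^N \Phi_{x_i} y_i \Bigr\|_\bmH^2 \geq 0,
\end{equation*}
so positive definiteness holds, and the uniqueness theorem cited earlier produces a unique $\bmH_K$.

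Second, I would show that $W g$ lies in $\bmH_K$ for every $g\in\bmH$, and that $W$ restricted to the closed subspace $V := \overline{\lspan\{\Phi_x y : x\in\bmX,\, y\in\bmY\}}$ is an isometry onto $\bmH_K$. On simple tensors $g = \sum_i \Phi_{x_i} y_i \in V$, a direct computation gives $(Wg)(z) = \sum_i \Phi_z^\adjoint \Phi_{x_i} y_i = \sum_i K(z,x_i) y_i = \bigl(\sum_i K_{x_i} y_i\bigr)(z)$, which is manifestly in $\bmH_K$. The key calculation is then to match the two norms: using the reproducing identity $\inner{K_x y, h}_K = \inner{y, h(x)}_\bmY$, one gets $\|Wg\|_K^2 = \sum_{i,j} \inner{y_i, K(x_i,x_j) y_j}_\bmY$, which equals $\|g\|_\bmH^2 = \sum_{i,j} \inner{y_i, \Phi_{x_i}^\adjoint \Phi_{x_j} y_j}_\bmY$ because $K(x_i,x_j) = \Phi_{x_i}^\adjoint \Phi_{x_j}$. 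Hence $W|_V$ is isometric on the dense subspace; extending by continuity preserves both the $\bmH_K$-valuedness and the isometry. Since the image contains $\lspan\{K_x y\}$, which is dense in $\bmH_K$, and is complete (being isometric to the Hilbert space $V$), it coincides with $\bmH_K$.

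Third, I would identify $\Ker W$. For $g\in V^\bot$ and any $x\in\bmX, y\in\bmY$, $\inner{\Phi_x^\adjoint g, y}_\bmY = \inner{g, \Phi_x y}_\bmH = 0$, so $\Phi_x^\adjoint g = 0$ for every $x$, giving $Wg = 0$; conversely $Wg = 0$ forces $\inner{g, \Phi_x y}_\bmH = 0$ for all $(x,y)$, whence $g\in V^\bot$. Therefore $\Ker W = V^\bot$ and $\Ker W^\bot = V$. Together with Step 2 this means $W$ is a partial isometry with initial space $V$ and final space $\bmH_K$, so by the standard characterization $W^\adjoint W$ is the orthogonal projection onto $V$.

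Finally, for the norm formula, given $f\in\bmH_K$, any $g\in\bmH$ with $Wg = f$ decomposes orthogonally as $g = g_V + g_{V^\bot}$ with $g_V \in V$. Since $g_{V^\bot}\in\Ker W$, $W g_V = f$ and the isometry on $V$ gives $\|g_V\|_\bmH = \|f\|_K$; thus $\|g\|_\bmH^2 = \|f\|_K^2 + \|g_{V^\bot}\|_\bmH^2 \geq \|f\|_K^2$, with equality iff $g = g_V$. This is the claimed infimum.

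The main obstacle I anticipate is the second step: carefully justifying that $W$ does take values in $\bmH_K$ (not merely in $\mathcal{F}(\bmX;\bmY)$) for every $g\in\bmH$, and that the isometric identity extending from the dense subspace of finite sums $\sum_i \Phi_{x_i} y_i$ actually produces an element of $\bmH_K$ in the limit — this relies on the completeness of $\bmH_K$ and on the continuity of the evaluation functionals, which ultimately stems from the boundedness $\|K_x\| = \|\Phi_x\|$ implicit in the assumption $\Phi_x \in \bmL(\bmY;\bmH)$.
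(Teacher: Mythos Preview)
The paper does not supply its own proof of this proposition: it is quoted verbatim from \citet{Carmeli2010} and used as background. Your five-stage argument is correct and is exactly the standard route one finds in that reference --- verify positive definiteness of $K(x,z)=\Phi_x^\adjoint\Phi_z$, check that $W$ restricted to $\overline{\lspan\{\Phi_x y\}}$ is an isometry onto $\bmH_K$, identify $\Ker W$ as the orthogonal complement of that span, invoke the partial-isometry characterization for $W^\adjoint W$, and read off the norm formula from the orthogonal decomposition. The point you flag as the main obstacle (that $Wg$ lands in $\bmH_K$ for \emph{every} $g\in\bmH$, not just for finite sums) is handled correctly by your density-plus-completeness argument together with the observation that $W$ vanishes on $V^\bot$; the continuity of point evaluations in $\bmH_K$ needed to identify the abstract limit with the pointwise function $x\mapsto\Phi_x^\adjoint g$ is indeed available from the reproducing property. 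One cosmetic remark: the proposition writes $\Ker W^\bot=\lspan\{\Phi_x y\}$ without a closure bar, but since $\Ker W$ is closed its orthogonal complement is closed too, so your use of the closed span $V$ is the right object.
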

We call $\Phi$ a \emph{feature map}, $W$ a \emph{feature operator} and $\bmH$ a \emph{feature space}.
\par
In this paper, we are interested on finding feature maps of this form for shift-invariant $\mathbb{R}^p$-Mercer kernels using the following definitions.
A reproducing kernel $K$ on $\mathbb{R}^d$ is a $\mathbb{R}^p$-Mercer provided that $\bmH_K$ is a subspace of $\bmC(\mathbb{R}^d;\mathbb{R}^p)$. It is said to be a \emph{shift-invariant kernel} or a \emph{translation-invariant kernel} for the addition if $ K(x+a,z+a)=K(x,z), \forall (x,z,a) \in \bmX^3$. It is characterized by a function $K_0:\bmX \to \bmL(\bmY)$ of completely positive type such that $K(x,z)=K_0(\delta)$, with $\delta=x-z$.
\section{Operator-valued Random Fourier Features}\label{sec:orff}
\subsection{Spectral representation of shift-invariant vector-valued Mercer kernels}
The goal of this work is to build approximated matrix-valued feature map for shift-invariant $\mathbb{R}^p$-Mercer kernels, denoted with $K$, such that any function $f \in \bmH_K$ can be approximated by a function $\tilde{f}$ defined by:
 \begin{equation*}
 \tilde{f}(x) = \tilde{\Phi}(x)^* \theta
 \end{equation*}
where $\tilde{\Phi}(x)$ is a matrix of size $(m \times p)$ and $\theta$ is an $m$-dimensional vector.
We propose a randomized approximation of such a feature map using a generalization of the Bochner theorem for operator-valued functions. For this purpose, we build upon the work of \citet{Carmeli2010} that introduced the Fourier representation of shift-invariant Operator-Valued Mercer Kernels on locally compact Abelian groups $\bmX$ using the general framework of Pontryagin duality (see for instance \citet{folland1994course}). In a few words, Pontryagin duality deals with functions on locally compact Abelian groups, and allows to define their Fourier transform in a very general way. For sake of simplicity, we instantiate the general results of \citet{Carmeli2010, Zhang2012} for our case of interest of $\bmX = \mathbb{R}^d$ and $\bmY = \mathbb{R}^p$.
The following proposition extends Bochner's theorem to any shift-invariant $\mathbb{R}^p$-Mercer kernel.
\begin{proposition}[Operator-valued Bochner's theorem\footnote{Equation (36) in \citet{Zhang2012}.}]
A continuous function $K$ from $\mathbb{R}^d \times \mathbb{R}^d$ to $\mathcal{L}(\mathbb{R}^p)$ is a shift-invariant reproducing kernel if and only if $\forall x, z \in \mathbb{R}^d$, it is the Fourier transform of a positive operator-valued measure $\bmM: \mathcal{B}(\mathbb{R}^d) \to \bmL_+(\mathbb{R}^p)$:
\begin{equation*}
 K(x, z) = \int_{\mathbb{R}^d} e^{-i \inner{x - z, \omega}} d\bmM(\omega),
\end{equation*}
where $\bmM$ belongs to the set of all the $\bmL_+(\mathbb{R}^p)$-valued measures of bounded variation on the $\sigma$-algebra of Borel subsets of $\mathbb{R}^d$.
\end{proposition}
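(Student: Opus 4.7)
The plan is to split the biconditional into the easy sufficiency direction and the harder necessity direction, with the latter reduced to the scalar Bochner theorem (\cref{th:bochner-scalar}) by polarization in $\mathbb{R}^p$.

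For the \emph{sufficiency} direction, suppose $\bmM:\mathcal{B}(\mathbb{R}^d)\to\bmL_+(\mathbb{R}^p)$ is a positive operator-valued measure of bounded variation and set $K(x,z)=\int_{\mathbb{R}^d} e^{-i\inner{x-z,\omega}}d\bmM(\omega)$. Shift-invariance is immediate since only $x-z$ appears in the integrand; continuity follows from the dominated convergence theorem using the bounded variation of $\bmM$. The nonnegativity condition defining a $\mathbb{R}^p$-reproducing kernel follows from a direct manipulation: for arbitrary $x_1,\dots,x_N\in\mathbb{R}^d$ and $y_1,\dots,y_N\in\mathbb{R}^p$,
\begin{equation*}
\sum_{i,j=1}^N\inner{K(x_i,x_j)y_j,y_i}
=\int_{\mathbb{R}^d}\Inner{d\bmM(\omega)\,v(\omega),v(\omega)}\ge0,\quad v(\omega)=\sum_{j=1}^N e^{i\inner{x_j,\omega}}y_j,
\end{equation*}
where positivity of $\bmM$ makes the integrand pointwise nonnegative.

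For the \emph{necessity} direction, assume $K$ is a continuous shift-invariant $\mathbb{R}^p$-Mercer kernel with signature $K_0$. First I would reduce to scalar Bochner: for each fixed $y\in\mathbb{R}^p$, the function $\delta\mapsto k_y(\delta)\triangleq\inner{K_0(\delta)y,y}$ is continuous and positive definite in the scalar sense, hence by \cref{th:bochner-scalar} there exists a unique finite nonnegative Borel measure $\mu_y$ on $\mathbb{R}^d$ with $k_y(\delta)=\int e^{-i\inner{\delta,\omega}}d\mu_y(\omega)$. Next, I would define a set-function $\bmM$ via polarization: for every $A\in\mathcal{B}(\mathbb{R}^d)$ and $y,y'\in\mathbb{R}^p$, set
\begin{equation*}
\inner{\bmM(A)y,y'}\triangleq\tfrac{1}{4}\bigl(\mu_{y+y'}(A)-\mu_{y-y'}(A)\bigr).
\end{equation*}
Uniqueness of $\mu_y$ transfers bilinearity in $(y,y')$ from that of $K_0(\delta)$, so $\bmM(A)$ extends to a well-defined symmetric operator on $\mathbb{R}^p$; positivity $\mu_y(A)=\inner{\bmM(A)y,y}\ge0$ shows $\bmM(A)\in\bmL_+(\mathbb{R}^p)$; and countable additivity of each $\mu_y$ in the Borel $\sigma$-algebra, together with polarization, yields countable additivity of $\bmM$ in the strong operator topology. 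Boundedness of $\bmM$'s variation is controlled by $\mu_y(\mathbb{R}^d)=k_y(0)\le\norm{K_0(0)}\norm{y}^2$, applied to a basis of $\mathbb{R}^p$.

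Finally I would verify the Fourier representation. Both $\delta\mapsto\inner{K_0(\delta)y,y'}$ and $\delta\mapsto\int e^{-i\inner{\delta,\omega}}\inner{d\bmM(\omega)y,y'}$ are continuous and, by construction, agree as sesquilinear polarizations of the scalar Bochner identity for $\mu_{y\pm y'}$. Since this equality holds for every pair $(y,y')$ in a basis of $\mathbb{R}^p$, it holds as an operator identity, which gives the desired representation. The main obstacle I expect is the measure-theoretic verification in the polarization step, namely checking that $A\mapsto\bmM(A)$ is genuinely countably additive as an operator-valued measure of bounded variation and that $\bmM(A)$ is positive as an operator (not merely that its diagonal quadratic form is nonnegative); this essentially requires showing that the polarization identity, which is purely algebraic, is compatible with the $\sigma$-additivity inherited from each scalar Bochner measure $\mu_y$, which in finite dimension $p$ reduces to a routine but careful matrix-entrywise argument.
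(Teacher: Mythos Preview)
The paper does not supply its own proof of this proposition: it is stated with an explicit attribution to \citet{Zhang2012} (see the footnote in the statement), and the surrounding text makes clear that the authors are quoting spectral-representation results from \citet{Carmeli2010} and \citet{Zhang2012} rather than re-deriving them. So there is no ``paper proof'' to compare against.

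Your argument is a correct and standard route to the operator-valued Bochner theorem in finite dimensions. The sufficiency direction is fine as written. For necessity, reducing to the scalar Bochner theorem by looking at the quadratic forms $k_y(\delta)=\inner{K_0(\delta)y,y}$ and then recovering the operator-valued measure by polarization is exactly the approach taken in the references the paper cites (see e.g.\ the proof underlying Proposition~13 of \citet{Carmeli2010}). The one place to be a little more explicit is the bilinearity/additivity step: you should spell out that uniqueness of the Bochner measure together with the parallelogram identity $k_{y+y'}+k_{y-y'}=2k_y+2k_{y'}$ forces $\mu_{y+y'}+\mu_{y-y'}=2\mu_y+2\mu_{y'}$ and $\mu_{\lambda y}=\lambda^2\mu_y$, from which genuine bilinearity of $(y,y')\mapsto\inner{\bmM(A)y,y'}$ follows by the usual polarization algebra. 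With that detail filled in, the finite-dimensionality of $\mathbb{R}^p$ makes the remaining measure-theoretic checks (countable additivity, bounded variation via $\mu_y(\mathbb{R}^d)=\inner{K_0(0)y,y}$) routine, as you note.
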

However it is much more convenient to use a more explicit result that involves real-valued (positive) measures. The following proposition instantiates Prop. 13 in \citet{Carmeli2010} to matrix-valued operators.
%Thus \cref{eq:ovk_Fourier_decomposition} generalizes Bochner's theorem for operator-valued Mercer kernels.
\begin{proposition}[\citet{Carmeli2010}]\label{pr:bochner}
Let $\mu$ be a positive measure on $\mathbb{R}^d$ and $A: \mathbb{R}^d\to \mathcal{L}(\mathbb{R}^p)$ such that $\inner{A(.)y,y'}\in L^1(\mathbb{R}^d,d\mu)$ for all $y,y'\in\mathbb{R}^p$ and $A(\omega)\ge0$ for $\mu$-almost all $\omega$. Then, for all $\delta \in \mathbb{R}^d$, $\forall \ell,m \inrange{1}{p}$,
\begin{equation}
\label{eq:AK0}
K_0(\delta)_{\ell m}=\int_{\mathbb{R}^d}e^{-i\inner{\delta,\omega}}A(\omega)_{\ell m}d\mu(\omega)
\end{equation}
is the kernel signature of a shift-invariant $\mathbb{R}^p$-Mercer kernel $K$ such that $K(x,z)=K_0(x-z)$. In other terms, each real-valued function $K_0(\cdot)_{\ell m}$ is the Fourier transform of $A(\cdot)_{\ell m}\density(\cdot)$ where $\density(\omega)=\frac{d\mu}{d\omega}$ is the Radon-Nikodym derivative of the measure $\mu$, which is also called the density of the measure $\mu$. Any shift-invariant kernel is of the above form for some pair $(A(\omega),\mu (\omega))$.
\end{proposition}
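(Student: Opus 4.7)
The plan is to verify, directly from the ingredients $(A,\mu)$, that $K(x,z) := K_0(x-z)$ with $K_0$ defined by \cref{eq:AK0} is a continuous, shift-invariant, positive-definite $\mathcal{L}(\mathbb{R}^p)$-valued kernel on $\mathbb{R}^d$, and then to obtain the converse by reducing to the operator-valued Bochner theorem stated just above. Well-definedness and reality of the integral for each entry $(\ell,m)$ follow from the hypothesis $\inner{A(\cdot)e_m, e_\ell} \in L^1(\mathbb{R}^d,d\mu)$ combined with the fact that $A(\omega) \geq 0$ forces $A(\omega)$ to be symmetric, so after the substitution $\omega \mapsto -\omega$ the imaginary part of the integrand cancels; shift-invariance is built into the formula $K(x,z) = K_0(x-z)$.

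The crux is positive definiteness. Given $x_1,\ldots,x_N \in \mathbb{R}^d$ and $y_1,\ldots,y_N \in \mathbb{R}^p$, Fubini's theorem (justified by the $L^1$ assumption and the finiteness of the sum) allows the exchange
\[
  \sum_{i,j=1}^N \inner{K(x_i,x_j)y_j,\, y_i}
  \;=\; \int_{\mathbb{R}^d} \sum_{i,j=1}^N e^{-i\inner{x_i - x_j,\,\omega}}\, \inner{A(\omega)y_j, y_i}\, d\mu(\omega).
\]
Setting $z_i(\omega) := e^{-i\inner{x_i,\omega}}$ and extending $A(\omega)$ complex-linearly to $\mathbb{C}^p$ (with the standard Hermitian inner product conjugate-linear in the second slot), the integrand rewrites as $\inner{A(\omega)\xi(\omega), \xi(\omega)}_{\mathbb{C}^p}$ with $\xi(\omega) := \sum_j \overline{z_j(\omega)}\, y_j$. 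Since $A(\omega) \geq 0$ for $\mu$-almost every $\omega$, this quantity is nonnegative almost everywhere, and the integral is nonnegative.

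Continuity of $K_0$ follows from dominated convergence applied entrywise: the bound $|e^{-i\inner{\delta,\omega}} A(\omega)_{\ell m}| \leq |A(\omega)_{\ell m}|$ together with the Cauchy--Schwarz inequality for positive matrices, $|A(\omega)_{\ell m}|^2 \leq A(\omega)_{\ell\ell} A(\omega)_{mm}$, yields an $L^1(\mu)$ envelope. Continuity and local compactness of $\mathbb{R}^d$ then imply $\bmH_K \subset \bmC(\mathbb{R}^d;\mathbb{R}^p)$, that is, $K$ is $\mathbb{R}^p$-Mercer. For the converse, the operator-valued Bochner theorem stated just above provides an $\bmL_+(\mathbb{R}^p)$-valued measure $\bmM$ of bounded variation with $K(x,z) = \int e^{-i\inner{x-z,\omega}}\, d\bmM(\omega)$. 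Choosing any scalar positive measure dominating all entries of $\bmM$ (for example $\mu := \operatorname{tr}\bmM$) and applying the Radon--Nikodym theorem entrywise produces a measurable density $A: \mathbb{R}^d \to \mathcal{L}(\mathbb{R}^p)$, and the positivity of $\bmM$ transfers to $A(\omega) \geq 0$ for $\mu$-a.e.\ $\omega$.

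The main technical obstacle I anticipate is the positivity step: one has to formulate the complex quadratic form carefully so that the real-matrix positivity of $A(\omega)$ directly yields $\sum_{i,j} z_i \bar z_j \inner{A(\omega) y_j, y_i} \geq 0$, since the inner product on $\mathbb{R}^p$ is bilinear while the Fourier factors are complex. Once this complexification is done correctly, the remaining items (well-definedness, continuity, Mercer property, converse via Radon--Nikodym) are routine.
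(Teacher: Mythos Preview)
The paper does not supply its own proof of this proposition; immediately after the statement it simply writes ``This theorem is proved in \citet{Carmeli2010}'' and moves on. Your argument is therefore not competing with anything in the present text.

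That said, your outline is the natural one and is essentially what one finds in the cited reference: pull the finite double sum inside the integral by Fubini, recognize the integrand as a nonnegative Hermitian form in the positive operator $A(\omega)$, get continuity by dominated convergence entrywise, and for the converse disintegrate the operator-valued measure $\bmM$ from the operator Bochner theorem against a scalar dominating measure such as $\mu=\operatorname{tr}\bmM$ via Radon--Nikodym. The complexification step you flag as the main obstacle is indeed the only place requiring care, and your treatment of it is correct.

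One caveat: your claim that ``the imaginary part of the integrand cancels'' after the substitution $\omega\mapsto-\omega$ tacitly uses $A(-\omega)\,d\mu(-\omega)=A(\omega)\,d\mu(\omega)$, which is \emph{not} among the stated hypotheses. The paper itself acknowledges this gap a few lines later, remarking that reality of the kernel requires $A(\omega)$ to be even in $\omega$. Without that symmetry the integral still defines a continuous positive-definite kernel, but with values in $\mathcal{L}(\mathbb{C}^p)$ rather than $\mathcal{L}(\mathbb{R}^p)$; the ``$\mathbb{R}^p$-Mercer'' label in the statement is slightly loose here, and you should not assert reality from the hypotheses as written.
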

This theorem is proved in \citet{Carmeli2010}. When $p=1$ one can always assume $A$ is reduced to the scalar $1$, $\mu$ is still a bounded positive measure and we retrieve the Bochner theorem applied to the scalar case (\cref{th:bochner-scalar}).
Now we introduce the following proposition that directly is a direct consequence of \cref{pr:bochner}.
\begin{proposition}[Feature map]
Given the conditions of \cref{pr:bochner}, we define $B(\omega)$ such that $A(\omega) = B(\omega)B(\omega)^*$. Then the function $\Phi:\mathbb{R}^p \rightarrow L^2(\mathbb{R}^d,\mu;\mathbb{R}^p)$ defined for all $x \in \mathbb{R}^p$ by
\begin{equation}
\label{eq:feature_shiftinv_operator}
\forall y\in\mathbb{R}^p,\enskip \left(\Phi_x y \right)(\omega)=e^{i\inner{x,\omega}}B(\omega)^*y
\end{equation}
is a feature map of the shift-invariant kernel $K$, i.e. it satisfies for all $x,z$ in $\mathbb{R}^d$, $\Phi_x^*\Phi_z=K(x,z)$.
\end{proposition}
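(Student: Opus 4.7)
My plan is to verify the identity $\Phi_x^\adjoint \Phi_z = K(x,z)$ by a direct computation in $L^2(\mathbb{R}^d, \mu; \mathbb{R}^p)$, reducing it to \cref{pr:bochner}. The argument has three short steps: (i) explain why $B$ is well defined and $\Phi_x$ maps into $L^2(\mu)$, (ii) expand the inner product $\inner{\Phi_z y, \Phi_x y'}_{L^2(\mu)}$ weakly against arbitrary test vectors $y, y' \in \mathbb{R}^p$, (iii) recognize the resulting scalar integral as the matrix entry supplied by Bochner's theorem.

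For (i), since $A(\omega) \ge 0$ for $\mu$-almost every $\omega$, I can factor $A(\omega) = B(\omega) B(\omega)^\adjoint$, e.g. via the positive square root $B(\omega) = A(\omega)^{1/2}$, so in particular $B$ is measurable. To check that $\Phi_x y \in L^2(\mathbb{R}^d, \mu; \mathbb{R}^p)$ I compute
\begin{equation*}
\int_{\mathbb{R}^d} \norm{(\Phi_x y)(\omega)}^2 d\mu(\omega)
= \int_{\mathbb{R}^d} \inner{B(\omega)^\adjoint y, B(\omega)^\adjoint y} d\mu(\omega)
= \int_{\mathbb{R}^d} \inner{A(\omega) y, y} d\mu(\omega),
\end{equation*}
which is finite by the integrability hypothesis of \cref{pr:bochner}.

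For (ii) and (iii), using the defining property of the adjoint together with $|e^{i\inner{x,\omega}}|=1$, for any $y, y' \in \mathbb{R}^p$:
\begin{equation*}
\inner{\Phi_x^\adjoint \Phi_z y, y'}
= \inner{\Phi_z y, \Phi_x y'}_{L^2(\mu)}
= \int_{\mathbb{R}^d} e^{-i\inner{x - z, \omega}} \inner{B(\omega) B(\omega)^\adjoint y, y'} d\mu(\omega).
\end{equation*}
The scalar integrand equals $e^{-i\inner{x-z,\omega}} \inner{A(\omega) y, y'}$, so writing it componentwise and applying \cref{pr:bochner} to each entry yields $\inner{K_0(x-z) y, y'} = \inner{K(x,z) y, y'}$. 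Since $y'$ is arbitrary this gives $\Phi_x^\adjoint \Phi_z y = K(x,z) y$ for every $y \in \mathbb{R}^p$, i.e. $\Phi_x^\adjoint \Phi_z = K(x,z)$.

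There is no genuine obstacle here: the statement is essentially a repackaging of \cref{pr:bochner} once one has chosen a square root of $A(\omega)$. The only subtle bookkeeping is making sure the complex exponentials cancel correctly (conjugation on the left slot of the $L^2$ inner product produces $e^{-i\inner{x-z,\omega}}$) and that the resulting matrix-valued integral is interpreted entrywise so that the scalar Bochner formula from \cref{pr:bochner} applies directly.
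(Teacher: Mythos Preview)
Your argument is correct and is exactly the direct verification the paper has in mind: the paper does not spell out a proof but simply states that the proposition is ``a direct consequence of \cref{pr:bochner}'', and your three-step computation (measurable square root, $L^2$-membership, weak identification of $\Phi_x^\adjoint\Phi_z$ with $K(x,z)$ via \cref{eq:AK0}) is precisely that consequence made explicit.
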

Thus, to define an approximation of a given operator-valued kernel, we need an inversion theorem that provides an explicit construction of the pair $A(\omega), \mu(\omega)$ from the kernel signature. Proposition 14 in \citet{Carmeli2010}, instantiated to $\mathbb{R}^p$-Mercer kernel gives the solution.
\begin{proposition}[\citet{Carmeli2010}]
\label{pr:inverse_ovk_Fourier_decomposition}
Let $K$ be a shift-invariant $\mathbb{R}^p$-Mercer kernel. Suppose that $\forall z \in \mathbb{R}^d, \forall y,y' \in\mathbb{R}^{p}$, $\inner{K_0(.)y,y'}\in L^1(\mathbb{R}^d,dx)$ where $dx$ denotes the Lebesgue measure. Define $C$ such that $\forall \omega \in \mathbb{R}^d, \forall \ell,m \inrange{1}{p}$,
\begin{equation}\label{eq:CK0}
C(\omega)_{\ell m} = \int_{\mathbb{R}^d} e^{i \inner{\delta,\omega}}K_0(\delta)_{\ell m}d\delta.
\end{equation}
Then
\begin{enumerate}[i)]
\item $C(\omega)$ is an non-negative matrix for all $\omega\in\mathbb{R}^d$,
\item $\inner{C(.)y,y'}\in L^1(\mathbb{R}^d,d\omega)$ for all $y,y'\in\mathbb{R}^p$,
\item for all $\delta\in\mathbb{R}^d$,$\forall \ell,m \inrange{1}{p}$,
\begin{equation*}
K_0(\delta)_{\ell m}=\int_{\mathbb{R}^d}e^{-i \inner{\delta,\omega}}C(\omega)_{\ell m}d\omega.
\end{equation*}
\end{enumerate}
\end{proposition}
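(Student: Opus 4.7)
My plan is to reduce the operator-valued statement to its scalar entries, then combine the operator Bochner theorem proved just above with scalar Fourier inversion.

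First, I invoke the operator Bochner theorem to obtain a positive operator-valued measure $\bmM$ with $K_0(\delta)=\int_{\mathbb{R}^d}e^{-i\inner{\delta,\omega}}d\bmM(\omega)$. For fixed $y,y'\in\mathbb{R}^p$ set $k_{y,y'}(\delta)=\inner{K_0(\delta)y,y'}$ and let $\mu_{y,y'}(B)=\inner{\bmM(B)y,y'}$. Then $k_{y,y'}$ is the Fourier transform of $\mu_{y,y'}$, and in the diagonal case $y=y'$ the measure $\mu_{y,y}$ is positive and finite with $\mu_{y,y}(\mathbb{R}^d)=\inner{K_0(0)y,y}$. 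The hypothesis $\inner{K_0(\cdot)y,y'}\in L^1(\mathbb{R}^d,dx)$ makes $k_{y,y}$ an $L^1$ function, so I am in the situation of a positive finite measure whose Fourier transform is integrable.

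Second, I apply the classical scalar Fourier inversion theorem: under these conditions $\mu_{y,y}$ is absolutely continuous with respect to Lebesgue measure, and its density equals $\density_{y,y}(\omega)=\int_{\mathbb{R}^d}e^{i\inner{\delta,\omega}}k_{y,y}(\delta)d\delta$ (up to the normalization convention fixed in the paper). By \cref{eq:CK0} together with polarization, this density is exactly $\inner{C(\omega)y,y}$. Positivity of the measure gives $\inner{C(\omega)y,y}\ge 0$ a.e., and since $K_0\in L^1$ (entrywise) forces $C$ to be continuous (Riemann--Lebesgue), the inequality holds pointwise, which proves (i). Moreover the density lies in $L^1$ with $L^1$-norm $\inner{K_0(0)y,y}$, and polarizing
\begin{equation*}
4\inner{C(\omega)y,y'}=\inner{C(\omega)(y+y'),y+y'}-\inner{C(\omega)(y-y'),y-y'}
\end{equation*}
(with the usual complex analogue if required) extends this to (ii).

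Finally, (iii) follows by applying the scalar Fourier inversion theorem entrywise: by (ii) each $C(\cdot)_{\ell m}\in L^1(\mathbb{R}^d,d\omega)$, by hypothesis each $K_0(\cdot)_{\ell m}\in L^1(\mathbb{R}^d,dx)$, and \cref{eq:CK0} together with Fourier inversion in $L^1\cap\widehat{L^1}$ gives $K_0(\delta)_{\ell m}=\int_{\mathbb{R}^d}e^{-i\inner{\delta,\omega}}C(\omega)_{\ell m}d\omega$. The main obstacle is the bookkeeping at the interface between operator-valued and scalar analysis, namely verifying that the operator-valued measure $\bmM$ from operator Bochner admits a matrix density whose scalar form on every pair $(y,y')$ matches $\inner{C(\cdot)y,y'}$; once this identification is established, (i), (ii) and (iii) all emerge together from the single application of scalar Fourier inversion.
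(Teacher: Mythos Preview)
The paper does not supply its own proof of this proposition: it is quoted verbatim as Proposition~14 of \citet{Carmeli2010}, instantiated to the finite-dimensional case $\bmY=\mathbb{R}^p$, and the reader is referred there. Your argument---pass to the scalar quadratic forms $\inner{K_0(\cdot)y,y}$, invoke scalar Bochner to identify the associated finite positive measure, use the $L^1$ hypothesis to apply the scalar Fourier inversion theorem and recover a continuous nonnegative density, then polarize---is the standard route and is essentially how the result is obtained in \citet{Carmeli2010} as well. One small point worth tightening: when you say ``$\mu_{y,y}$ is absolutely continuous with respect to Lebesgue measure'' you are invoking the fact that a finite positive Borel measure whose Fourier transform lies in $L^1$ is necessarily absolutely continuous with continuous density equal to the inverse Fourier transform; this is correct but deserves an explicit citation (e.g.\ \citet{folland1994course}) rather than being folded into ``classical scalar Fourier inversion,'' since the usual inversion theorem is stated for functions rather than measures.
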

From \cref{eq:AK0} and \cref{eq:CK0}, we can write the following equality concerning the matrix-valued kernel signature $K_0$, coefficient by coefficient: $\forall \delta \in \mathbb{R}^d, \forall i,j \inrange{1}{p},$
\begin{equation*}
\int_{\mathbb{R}^d}e^{-i\inner{\delta, \omega}}C(\omega)_{ij}d\omega=\int_{\mathbb{R}^d}e^{-i \inner{\delta, \omega}}A(\omega)_{ij}d\mu(\omega).
\end{equation*}
We then conclude that the following equality holds almost everywhere for $\omega \in \mathbb{R}^d$: $C(\omega)_{ij}=A(\omega)_{ij}p_{\mu}(\omega)$ where $p_{\mu}(\omega)=\frac{d\mu}{d\omega}$. Without loss of generality we assume that $\int_{\mathbb{R}^d} d\mu(\omega)=1$ and thus, $\mu$ is a probability distribution. Note that this is always possible through an appropriate normalization of the kernel. Then $p_{\mu}$ is the density of $\mu$. The \cref{pr:bochner} thus results in an expectation:
\begin{equation}
 K_0(x-z)= \expectation_{\mu}[e^{-i\inner{x-z,\omega}} A(\omega)]
\end{equation}
\subsection{Construction of Operator Random Fourier Feature}
Given a $\mathbb{R}^p$-Mercer shift-invariant kernel $K$ on $\mathbb{R}^d$, we build an Operator-Valued Random Fourier Feature (ORFF) map in three steps:
\begin{enumerate}[1)]
\item compute $C:\mathbb{R}^d \rightarrow \bmL(\mathbb{R}^p)$ from \cref{eq:CK0} by using the inverse Fourier transform (in the sense of \cref{pr:inverse_ovk_Fourier_decomposition}) of $K_0$, the signature of $K$;
\item find $A(\omega)$, $\density(\omega)$ and compute $B(\omega)$ such that $A(\omega)=B(\omega)B(\omega)^*$;
\item build an randomized feature map via Monte-Carlo sampling from the probability measure $\mu$ and the application $B$.
\end{enumerate}
\subsection{Monte-Carlo approximation}
Let $\vect_{j=1}^D X_j$ denote the block matrix of size $rD \times s$ obtained by stacking $D$ matrices $X_1, \ldots, X_D$ of size $r \times s$.
Assuming steps 1 and 2 have been performed, for all $j=1, \ldots, n$, we find a decomposition $A(\omega_j)=B(\omega_j)B(\omega_j)^*$ either by exhibiting a general analytical closed-form or using a numerical decomposition. Denote $p \times p'$ the dimension of the matrix $B(\omega)$. We then propose a randomized matrix-valued feature map: $\forall x \in \mathbb{R}^d$,
\begin{equation}
\label{eq:OV_RFF1}
\begin{aligned}
\tilde{\Phi}(x)
	&=\frac{1}{\sqrt{D}}\Vect_{j=1}^D \Phi_x(\omega_j), \enskip\omega_j \sim \mu \\
	&=\frac{1}{\sqrt{D}}\Vect_{j=1}^D e^{-i \inner{x,\omega_j}}B(\omega_j)^*, \enskip\omega_j \sim \mu.
\end{aligned}
\end{equation}
The corresponding approximation for the kernel is then: $\forall x,z \in \mathbb{R}^d$
\begin{equation*}
\begin{aligned}
\tilde{K}(x,z)&=\tilde{\Phi}(x)^*\tilde{\Phi}(z)\\
&=\frac{1}{D}\sum\nolimits_{j=1}^D e^{-i \inner{x,\omega_j}}B(\omega_j)e^{i \inner{z,\omega_j}}B(\omega_j)^*\\
&=\frac{1}{D}\sum\nolimits_{j=1}^D e^{-i\inner{x-z,\omega_j}}A(\omega_j).
\end{aligned}
\end{equation*}
The Monte-Carlo estimator $\tilde{\Phi}(x)^*\tilde{\Phi}(z)$ %(plug-in estimator)
converges in probability to $K(x,z)$ when $D$ tends to infinity. Namely,
\begin{equation*}
    \tilde{K}(x,z)=\tilde{\Phi}(x)^*\tilde{\Phi}(z) \xrightarrow[D\to\infty]{p.} \expectation_{\mu}\left[ e^{-i\inner{x-z,\omega}}A(\omega) \right]=K(x,z)
\end{equation*}
As for the scalar-valued kernel, a real-valued matrix-valued function has a real matrix-valued Fourier transform if $A(\omega)$ is even with respect to $\omega$. Taking this point into account, we define the feature map of a real matrix-valued kernel as
\begin{equation*}
\begin{aligned}
\tilde{\Phi}(x)=\frac{1}{\sqrt{D}}\Vect_{j=1}^D\begin{pmatrix}\cos{\inner{x,\omega_j}}B(\omega_j)^\adjoint \\ \sin{\inner{x,\omega_j}}B(\omega_j)^\adjoint\end{pmatrix}, \enskip \omega_j \sim \mu.
\end{aligned}
\end{equation*}
The kernel approximation becomes
\begin{equation*}
\begin{aligned}
\tilde{\Phi}(x)^*\tilde{\Phi}(z) &= \frac{1}{D} \sum_{j=1}^D \substack{ \cos{\inner{x,\omega_j}}\cos{\inner{z,\omega_j}} A(\omega_j) \\ \sin{\inner{x,\omega_j}}\sin{\inner{z,\omega_j}} A(\omega_j) }+ \\
&= \frac{1}{D} \sum_{j=1}^D \cos{\inner{x-z,\omega_j}}A(\omega_j).
\end{aligned}
\end{equation*}
In the following, we give an explicit construction of ORFFs for three well-known $\mathbb{R}^p$-Mercer and shift-invariant kernels: the \emph{decomposable kernel} introduced in \citet{Micchelli2005} for multi-task regression and the \emph{curl-free} and the \emph{divergence-free} kernels studied in \citet{Macedo2008, Baldassare2012} for vector field learning. All these kernels are defined using a scalar-valued shift-invariant Mercer kernel $k: \mathbb{R}^d \times \mathbb{R}^d \to \mathbb{R}$ whose signature is denoted $k_0$. A usual choice is to choose $k$ as a Gaussian kernel with $k_0(\delta) = \exp\left(-\frac{\norm{\delta}^2}{2\sigma^2}\right)$, which gives $\mu= \bmN(0,\sigma^{-2}I)$ \citep{huang2013random} as its inverse Fourier transform.
\begin{definition}[Decomposable kernel]
\label{dec-kernel}
Let A be a $(p \times p)$ positive semi-definite matrix. $K$ defined as $\forall (x,z) \in \mathbb{R}^d \times \mathbb{R}^d, K(x,z) = k(x,z)A$ is a $\mathbb{R}^p$-Mercer shift-invariant reproducing kernel.
\end{definition}
Matrix $A$ encodes the relationships between the outputs coordinates. If a graph coding for the proximity between tasks is known, then it is shown in \citet{Evgeniou2005, Baldassare2010} that $A$ can be chosen equal to the pseudo inverse $L^{\dagger}$ of the graph Laplacian, and then the $\ell_2$ norm in $\bmH_K$ is a graph-regularizing penalty for the outputs (tasks). When no prior knowledge is available, $A$ can be set to the empirical covariance of the output training data or learned with one of the algorithms proposed in the literature \citep{Dinuzzo2011, Sindhwani2013, Lim2015}. Another interesting property of the decomposable kernel is its universality. A reproducing kernel $K$ is said \emph{universal} if the associated RKHS $\bmH_K$ is dense in the space $\bmC(\bmX,\bmY)$.
\begin{example}[ORFF for decomposable kernel]
\begin{equation*}
C^{dec}(\omega)_{\ell m}=\int_{\mathcal{X}}e^{i\inner{\delta,\omega}} k_0(\delta)A_{\ell m} d\delta = A_{\ell m}\IFT{k_0}(\omega)\\
\end{equation*}
Hence, $A^{dec}(\omega)=A$ and $p_{\mu}^{dec}(\omega)=\IFT{k_0}(\omega)$.
\end{example}
\paragraph{ORFF for curl-free and div-free ker\-nels:}
Curl-free and divergence-free kernels provide an interesting application of operator-valued kernels \citep{Macedo2008, Baldassare2012, Micheli2013} to \emph{vector field} learning, for which input and output spaces have the same dimensions ($d=p$). Applications cover shape deformation analysis \citep{Micheli2013} and magnetic fields approximations \citep{Wahlstrom2013}. These kernels discussed in \citet{Fuselier2006} allow encoding input-dependent similarities between vector-fields.
\begin{definition}[Curl-free and Div-free kernel]\label{curl-div-free}
We have $d=p$.
The \emph{divergence-free} kernel is defined as
\begin{equation*}\label{div-def}
K^{div}(x,z)=K^{div}_0(\delta)
= (\nabla\nabla^* - \Delta I) k_0(\delta)
\end{equation*}
and the \emph{curl-free} kernel as
\begin{equation*}\label{curl-def}
K^{curl}(x,z)=K_0^{curl}(\delta)=-\nabla\nabla^* k_0(\delta),
\end{equation*}
where $\nabla\nabla^*$ is the Hessian operator and $\Delta$ is the Laplacian operator.
\end{definition}
%To catch the intuition of those kernels, Fig. \ref{fig:curl} shows a dataset for learning 2D vector field
% here SAY SOMETHING for intuition
Although taken separately these kernels are not universal, a convex combination of the curl-free and divergence-free kernels allows to learn any vector field that satisfies the Helmholtz decomposition theorem \citep{Macedo2008, Baldassare2012}. For the divergence-free and curl-free kernel we use the differentiation properties of the Fourier transform.
\begin{example}[ORFF for curl-free kernel:] $\forall \ell,m \inrange{1}{p}$,
\begin{equation*}
\begin{aligned}
C^{curl}(\omega)_{\ell m}&=-\IFT{\frac{\partial}{\partial \delta_{\ell}}\frac{\partial}{\partial \delta_{m}}k_0}(\omega)\\
     &= \omega_{\ell}\omega_m \IFT{k_0}(\omega)
\end{aligned}
\end{equation*}
Hence, $A^{curl}(\omega)=\omega\omega^*$ and $p_{\mu}^{curl}(\omega)=\IFT{k_0}(\omega)$. We can obtain directly: $B^{curl}(\omega)=\omega$.
\end{example}
For the divergence-free kernel we first compute the Fourier transform of the Laplacian of a scalar kernel using differentiation and linearity properties of the Fourier transform. We denote $\delta_{\{\ell=m\}}$ as the Kronecker delta which is $1$ if $\ell=m$ and zero otherwise.
\begin{example}[ORFF for divergence-free kernel:]
\begin{equation*}
\begin{aligned}
 C^{div}(\omega)_{\ell m}&=\IFT{\frac{\partial}{\partial \delta_{\ell}}\frac{\partial}{\partial \delta_{m}}k_0-\delta_{\{\ell=m\}}\Delta k_0} \\
     &=\IFT{\frac{\partial}{\partial \delta_{\ell}}\frac{\partial}{\partial \delta_{m}}k_0}-\delta_{\{\ell=m\}}\IFT{\Delta k_0} \\
     &=(\delta_{\{\ell=m\}}-\omega_{\ell}\omega_{m})\norm{\omega}_2^2\IFT{k_0},
\end{aligned}
\end{equation*}
since
\begin{equation*}
  \IFT{\Delta k_0(\delta)}=\sum_{k=1}^p\IFT{\frac{\partial}{\partial\delta_{k}}k_0}=-\norm{\omega}_2^2\IFT{k_0}.
\end{equation*}
Hence $A^{div}(\omega)=I\norm{\omega}_2^2-\omega\omega^*$ and $p_{\mu}^{div}(\omega)=\IFT{k_0}(\omega)$. Here, $B^{div}(\omega)$ has to be obtained by a numerical decomposition such as Cholesky or SVD.
\end{example}
%Table 1 in Supplementary Material summarizes these results.

\begin{figure}
 \centering
 \includegraphics[width=0.9\textwidth]{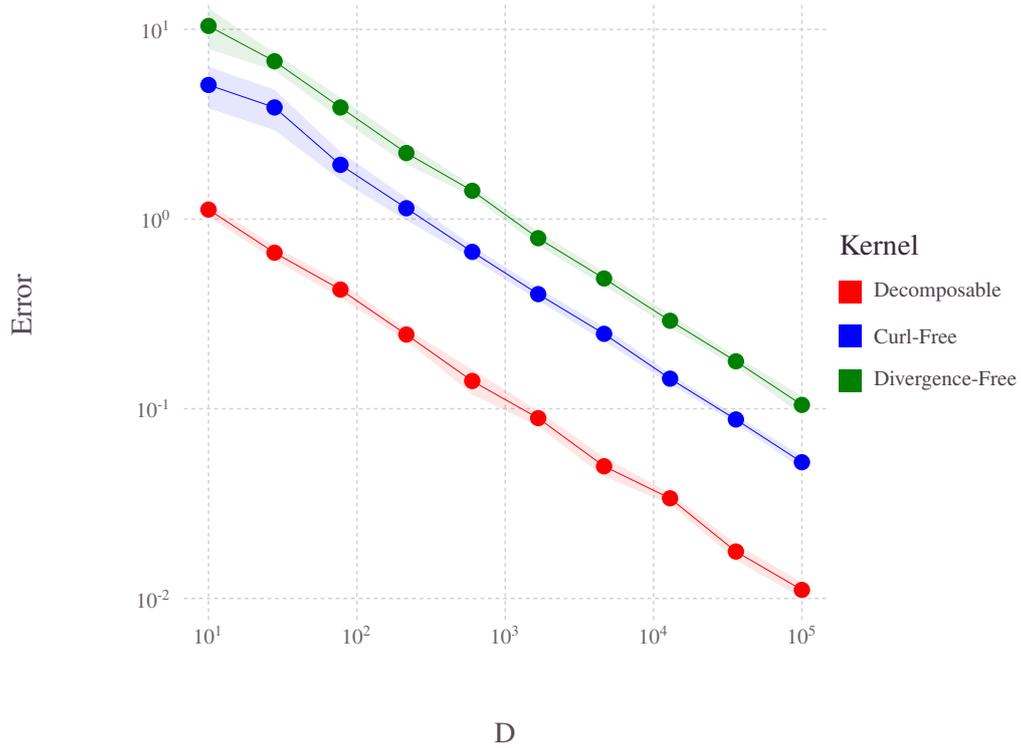}
 \caption{Empirical Approximation Error versus number of random features $D$ induced by the ORFF approximation for different operator-valued kernels}
 \label{fig:approximation_error}
\end{figure}

\section{Uniform error bound on ORFF approximation}\label{sec:concentration}
We are now interested on measuring how close the approximation $\tilde{K}(x,z)=\tilde{\Phi}(x)^*\tilde{\Phi}(z)$ is close to the target kernel $K(x,z)$ for any $x,z$ in a compact set $\mathcal{C}$. If $A$ is a real matrix, we denote $\norm{A}_2$ its spectral norm, defined as the square root of the largest eigenvalue of $A$. For $x$ and $z$ in some compact $\bmC \subset \mathbb{R}^d$, we consider: $F(x-z) =\tilde{K}(x,z)-K(x,z)$ and study how the uniform norm
\begin{equation}\label{eq:norm_inf}
  \norm{F}_{\infty}=\sup_{x,z\in\mathcal{C}}\norm{\tilde{K}(x,z)-K(x,z)}_2
\end{equation}
behaves according to $D$. \Cref{fig:approximation_error} empirically shows convergence of three different OVK approximations for $x,z$ from the compact $[-1,1]^4$ using an increasing number of sample points $D$. The log-log plot shows that all three kernels have the same convergence rate, up to a multiplicative factor.
\par
In order to bound the error with high probability, we turn to concentration inequalities devoted to random matrices \citep{Boucheron}. In the case of the decomposable kernel, the answer to that question can be directly obtained as a consequence of the uniform convergence of RFFs in the scalar case obtained by \citet{Rahimi2007} and other authors \citep{sutherland2015, sriper2015} since in this case,
\begin{equation*}
\norm{\tilde{K}(x,z)-K(x,z)}_2 = \norm{A}_2\norm{\tilde{k}(x,z) - k(x,z)}
\end{equation*}
This theorem and its proof are presented in \cref{sec:dec-bound}.
\par
More interestingly, we propose a new bound for Operator Random Fourier Feature approximation in the general case. It relies on three main ideas: (i) Matrix concentration inequality for random matrices has to be used instead of concentration inequality for (scalar) random variables, (ii) Instead of using Hoeffding inequality as in the scalar case (proof of \citet{Rahimi2007}) but for matrix concentration \citep{Mackey2014} we use a refined inequality such as the Bernstein matrix inequality \citep{Ahls2002,Boucheron,Tropp}, also used for the scalar case in \citep{sutherland2015}, (iii) we propose a general theorem valid for random matrices with bounded norms (case for decomposable kernel ORFF approximation) as well as with unbounded norms (curl and divergence-free kernels). For the latter, we notice that their norms behave as subexponential random variables \citep{Koltchinskii2013remark}. Before introducing the new theorem, we give the definition of the Orlicz norm and subexponential random variables.
\begin{definition}[Orlicz norm]
We follow the definition given by \citet{Koltchinskii2013remark}.
Let $\psi:\mathbb{R}_+\to\mathbb{R}_+$ be a non-decreasing convex function with $\psi(0)=0$. For a random variable $X$ on a measured space $(\Omega,\mathcal{T}(\Omega),\mu)$,
\begin{equation*}
  \norm{X}_{\psi} \triangleq \inf \left\{ C > 0 \,\, \middle| \,\, \expectation[\psi\left( \abs{X}/C \right)]\le 1 \right\}.
\end{equation*}
\end{definition}
Here, the function $\psi$ is chosen as $\psi(u)=\psi_{\alpha}(u)$ where $\psi_{\alpha}(u)\triangleq e^{u^{\alpha}}-1$. When $\alpha=1$, a random variable with finite Orlicz norm is called a \emph{subexponential variable} because its tails decrease at least exponentially fast.
\begin{theorem}
\label{pr:ov-rff_concentration}
Let $\mathcal{C}$ be a compact subset of $\mathbb{R}^d$ of diameter $l$. Let $K$ be a shift-invariant $\mathbb{R}^p$-Mercer kernel on $\mathbb{R}^d$, $K_0$ its signature and $\density(\cdot)A(\cdot)$ the inverse Fourier transform of the kernel's signature (in the sense of \cref{pr:inverse_ovk_Fourier_decomposition}) where $\density$ is the density of a probability measure $\mu$ considering appropriate normalization.
Let $D$ be a positive integer and $\omega_1, \ldots, \omega_D$, i.i.d. random vectors drawn according to the probability law $\mu$. For $x,z \in \mathcal{C}$, we recall
\begin{equation*}
    \tilde{K}(x,z)=\sum_{j=1}^D\cos \inner{x-z,\omega_j} A(\omega_j).
\end{equation*}
We note for all $j \inrange{1}{D}$,
\begin{equation*}
    F_j(x-z)=\frac{1}{D}\left(\sum_{j=1}^D\cos \inner{x-z,\omega_j} A(\omega_j)- K(x,z)\right)
\end{equation*}
and $F(x-z)= \tilde{K}(x,z) - K(x,z)$. $\norm{F}_{\infty}$ denotes the infinite norm of $F(x-z)$ on the compact $\mathcal{C}$ as introduced in \cref{eq:norm_inf}.
If one can define the following terms $(b_D, m, \sigma_p^2)\in\mathbb{R}^3_+$:
\begin{align*}
b_D &= \sup_{x,z\in\mathcal{C}}D\norm{\expectation_{\mu}\left[\sum_{j=1}^D \left(F_j(x-z)\right)^2\right]}_2,\\
% \label{eq:var-BD},\\
 m &= 4\left(\norm{\norm{A(\omega)}_2 }_{\psi_{1}}+\sup_{x,z\in\mathcal{C}}\norm{K(x,z)}\right),\enskip \omega \sim \mu,\\
% \label{eq:orlicz-A}\\
\sigma_p^2 &= \expectation_{\mu}\left[\norm{\omega}_2^2\norm{A(\omega)}_2^2\right].
% \label{eq:sigma}
\end{align*}
Then for all $\epsilon$ in $\mathbb{R}_+$,
\begin{multline*}
\probability \left\{ \norm{F}_\infty \ge \epsilon \right\} \le C_d \left( \frac{\sigma_p l}{\epsilon} \right)^{\frac{2}{1+2/d}} \begin{cases}
\exp\left(-\frac{\epsilon^2D}{8(d+2)\left(b_D+\frac{\epsilon\bar{u}_D}{6}\right)} \right) & \text{if}\enskip \bar{u}_D\le \frac{2(e-1)b_D}{\epsilon} \\
\exp\left( -\frac{\epsilon D}{(d+2)(e-1)\bar{u}_D} \right) & \text{otherwise},
\end{cases}
\end{multline*}
where
$\bar{u}_D = 2m\log\left( 2^{\frac{3}{2}} \left(\frac{m}{b_D}\right)^2\right)$ and $
C_d = p\left(\left(\frac{d}{2}\right)^{\frac{-d}{d+2}}+\left(\frac{d}{2}\right)^{\frac{2}{d+2}}\right)2^{\frac{6d+2}{d+2}}$.
\end{theorem}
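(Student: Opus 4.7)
The strategy is the classical $\epsilon$-net argument of Rahimi and Recht, modified in two essential ways to accommodate the operator-valued, possibly unbounded setting: the scalar Hoeffding tail is replaced by a matrix Bernstein inequality, and a truncation step is added to cope with the fact that $\norm{A(\omega)}_2$ is only subexponential in the $\psi_1$ sense rather than almost surely bounded. Since $F(x-z)$ depends on $(x,z)$ only through $\delta = x-z$, the supremum defining $\norm{F}_\infty$ reduces to a supremum over $\delta \in \mathcal{C} - \mathcal{C}$, a set of diameter at most $2l$. I would cover it by an $r$-net $\mathcal{N}_r$ of cardinality at most $T(l/r)^d$ for a dimensional constant $T$ and split
\begin{equation*}
    \probability\{\norm{F}_\infty \ge \epsilon\} \le \probability\{L_F \cdot r \ge \epsilon/2\} + |\mathcal{N}_r| \max_{\delta_0 \in \mathcal{N}_r} \probability\{\norm{F(\delta_0)}_2 \ge \epsilon/2\},
\end{equation*}
where $L_F$ denotes the Lipschitz constant in spectral norm of $\delta \mapsto F(\delta)$.

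To control the Lipschitz term, each summand $\cos\inner{\delta,\omega_j}A(\omega_j)$ has Lipschitz constant at most $\norm{\omega_j}\,\norm{A(\omega_j)}_2$, so Jensen applied to the average yields $\expectation[L_F^2] \lesssim \sigma_p^2$, and Markov gives $\probability\{L_F \cdot r \ge \epsilon/2\} \lesssim \sigma_p^2 r^2/\epsilon^2$. For each fixed $\delta_0 \in \mathcal{N}_r$, on the other hand, $F(\delta_0)= \sum_{j=1}^D F_j(\delta_0)$ is a sum of $D$ i.i.d.\ centered Hermitian matrices whose matrix variance equals $b_D/D$ by the definition of $b_D$. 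Because $\norm{A(\omega)}_2$ is not almost surely bounded, I would invoke the truncation trick at some level $u$: the standard bounded matrix Bernstein (Tropp) applies on $\{\norm{A(\omega_j)}_2 \le u\}$ with almost-sure bound proportional to $(u + \sup_{x,z}\norm{K(x,z)}_2)/D$, while the subexponential tail $\probability\{\norm{A(\omega)}_2 > u\} \le 2\exp(-u/m_0)$, coming from the finite $\psi_1$-norm, controls the complement through a union bound over the $D$ samples. Optimizing $u$ produces the announced level $\bar u_D = 2m\log(2^{3/2}(m/b_D)^2)$, and the two cases of the conclusion correspond exactly to whether $b_D$ or $\epsilon\bar u_D/6$ dominates in the Bernstein denominator, i.e.\ the sub-Gaussian versus the sub-exponential regime.

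The proof is finished by balancing $r$ so that the Lipschitz contribution and the union-bound contribution are of the same order. Setting $r^{d+2} \propto \epsilon^2 l^d p_\epsilon/\sigma_p^2$, where $p_\epsilon$ is the pointwise Bernstein probability at deviation $\epsilon/2$, produces the prefactor $(\sigma_p l/\epsilon)^{2d/(d+2)} = (\sigma_p l/\epsilon)^{2/(1+2/d)}$ and raises the Bernstein exponent to the power $2/(d+2)$, which explains the factor $(d+2)$ in the denominators of the two exponentials; the dimensional constant $C_d$ then absorbs $T$, the factor $p$ from the spectral-norm-to-Hilbert--Schmidt comparison of Hermitian $p \times p$ matrices, and the arithmetic of the balance. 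I expect the main obstacle to be the truncation step: one must re-center each truncated summand and verify that its matrix variance is still controlled by $b_D/D$ (rather than by a cruder upper estimate), while tracking the numerical constants sharply enough that the two piecewise regimes appear exactly as stated, with the $(e-1)$ factor in the sub-exponential branch naturally arising from the specific subexponential matrix Bernstein inequality being invoked.
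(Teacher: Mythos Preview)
Your proposal is correct and follows the same overall scheme as the paper: $\epsilon$-net on $\mathcal{C}-\mathcal{C}$, Lipschitz control via Markov with second moment $\sigma_p^2$, pointwise matrix concentration at the anchors, union bound, then optimization over the net radius $r$ to obtain the prefactor $(\sigma_p l/\epsilon)^{2d/(d+2)}$ and the factor $(d+2)$ in the exponents.

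The one structural difference is at the concentration step. The paper does not perform the truncation argument explicitly; it invokes a ready-made subexponential matrix Bernstein inequality (Theorem~4 of Koltchinskii, 2013), which already packages the truncation and optimization over the cut-off level and delivers directly the two-regime bound with the constants $\bar U$ and $(e-1)$ you anticipate. What you outline---truncate at a level $u$, apply Tropp's bounded matrix Bernstein on the event $\{\norm{A(\omega_j)}_2\le u\}$, control the complement via the $\psi_1$ tail, and optimize over $u$---is precisely the standard proof of that theorem, so your route is the same argument unfolded one layer further rather than a genuinely different approach. Citing Koltchinskii spares exactly the bookkeeping you flag as the main obstacle (re-centering after truncation and tracking constants), at the cost of importing a black box.

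One small correction: the factor $p$ in $C_d$ is not produced by a spectral-to-Hilbert--Schmidt norm comparison. It is the dimensional prefactor $2p$ that sits in front of the exponential in the matrix Bernstein inequality itself; it enters $\kappa_1$, survives the optimization over $r$ (bounded crudely by $p^{2/(d+2)}\le p$), and lands in $C_d$.
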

We detail the proof of the theorem in \cref{subsec:concentration_proof}. It follows the usual scheme derived in \citet{Rahimi2007} and \citet{sutherland2015} and involves Bernstein concentration inequality for unbounded symmetric matrices (\cref{th:bernstein1}).
\subsection{Application to some operator-valued kernel}
To apply \cref{pr:ov-rff_concentration} to operator-valued kernels, we need to ensure that all the constants exist. In the following, we first show how to bound the constant term $b_D$. Then we exhibit the upper bounds for the three operator-valued kernels we took as examples. Even\-tually, we ensure that the random variable $\norm{A(\omega)}$ has a finite Orlicz norm with $\psi=\psi_1$ in these three cases.
%For a given $\delta=x-z$, we note: $b_D(\delta)= D\norm{\expectation_{\mu}\left[\sum_{j=1}^D \left(F_j(\delta)\right)^2\right]}_2$.
\paragraph{Bounding the term $b_D(\delta)$:}
\begin{proposition}\label{pr:variance-bound} Define the matrix $\variance_{\mu}[A(\omega)]$ as follows: for all $\ell,m\inrange{1}{p}$,
\begin{equation*}
 \variance_{\mu}[A(\omega)]_{\ell m} = \sum_{r=1}^p \covariance{A(\omega)_{\ell r}, A(\omega)_{r m}}
\end{equation*}
For a given $\delta=x-z$, define:
\begin{equation*}
  b_D(\delta)= D\norm{\expectation_{\mu}\left[\sum_{j=1}^D \left(F_j(\delta)\right)^2\right]}_2.
\end{equation*}
Then we have:
\begin{equation*}
  b_D(\delta) \le \frac{1}{2} \norm{(K_0(2
\delta)+K_0(0))\expectation_\mu[A(\omega)] - 2K_0(\delta)^2}_2 + \norm{\variance_{\mu}[A(\omega)]}_{2}.
\end{equation*}
\end{proposition}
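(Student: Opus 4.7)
The starting point is a reduction by independence. Since $\omega_1,\dots,\omega_D$ are i.i.d.\ and each $F_j(\delta)=\tfrac{1}{D}(\cos\inner{\delta,\omega_j}A(\omega_j)-K_0(\delta))$ is centered with common distribution, one gets $\expectation_\mu[\sum_{j=1}^D F_j(\delta)^2]=\tfrac{1}{D}\expectation_\mu[(\cos\inner{\delta,\omega}A(\omega)-K_0(\delta))^2]$, so that
\begin{equation*}
 b_D(\delta)=\norm{\expectation_\mu[(\cos\inner{\delta,\omega}A(\omega)-K_0(\delta))^2]}_2.
\end{equation*}
Because $\expectation_\mu[\cos\inner{\delta,\omega}A(\omega)]=K_0(\delta)$ (the real part of \cref{pr:bochner}), this is exactly the spectral norm of the matrix variance of $Z(\omega)\triangleq\cos\inner{\delta,\omega}A(\omega)$. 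Expanding the square gives $\expectation_\mu[Z(\omega)^2]-K_0(\delta)^2$, and applying the identity $\cos^2\inner{\delta,\omega}=\tfrac12(1+\cos\inner{2\delta,\omega})$ splits this into
\begin{equation*}
 b_D(\delta)=\bigl\|\tfrac12\expectation_\mu[A(\omega)^2]+\tfrac12\expectation_\mu[\cos\inner{2\delta,\omega}A(\omega)^2]-K_0(\delta)^2\bigr\|_2.
\end{equation*}

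Next I would decompose $A(\omega)$ around its mean $K_0(0)=\expectation_\mu[A(\omega)]$. Writing $A(\omega)=K_0(0)+(A(\omega)-K_0(0))$ and squaring yields
\begin{equation*}
 A(\omega)^2=(A(\omega)-K_0(0))^2+A(\omega)K_0(0)+K_0(0)A(\omega)-K_0(0)^2,
\end{equation*}
a careful expansion that does not assume commutativity. Taking the unweighted expectation recovers $\expectation_\mu[A(\omega)^2]=\variance_\mu[A(\omega)]+K_0(0)^2$ (which matches the entrywise definition of $\variance_\mu$ given in the statement). Weighting by $\cos\inner{2\delta,\omega}$ and taking expectation produces (i) the ``residual'' term $\expectation_\mu[\cos\inner{2\delta,\omega}(A(\omega)-K_0(0))^2]$ and (ii) deterministic matrix products involving $K_0(0)$, $K_0(2\delta)$, and $\expectation_\mu[\cos\inner{2\delta,\omega}]$, which can be combined so that only the quantities $K_0(0)\expectation_\mu[A(\omega)]$ and $K_0(2\delta)\expectation_\mu[A(\omega)]$ remain after exploiting $K_0(0)=\expectation_\mu[A(\omega)]$.

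The key matrix inequality used throughout is the following sandwich: if $M(\omega)\succeq 0$ $\mu$-a.s., then $-M(\omega)\preceq \cos\inner{2\delta,\omega}M(\omega)\preceq M(\omega)$ pointwise, hence $-\expectation_\mu[M(\omega)]\preceq\expectation_\mu[\cos\inner{2\delta,\omega}M(\omega)]\preceq\expectation_\mu[M(\omega)]$, and consequently
\begin{equation*}
 \norm{\expectation_\mu[\cos\inner{2\delta,\omega}M(\omega)]}_2\le\norm{\expectation_\mu[M(\omega)]}_2.
\end{equation*}
Applying this with $M(\omega)=(A(\omega)-K_0(0))^2\succeq 0$ (which is psd since $A(\omega)$ is self-adjoint as a positive operator $\mu$-a.s.) bounds the residual piece by $\norm{\variance_\mu[A(\omega)]}_2$. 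A final triangle-inequality step separates the deterministic ``smooth'' part $\tfrac12\norm{(K_0(2\delta)+K_0(0))\expectation_\mu[A(\omega)]-2K_0(\delta)^2}_2$ from the variance part $\norm{\variance_\mu[A(\omega)]}_2$, delivering the claim.

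\textbf{Main obstacle.} The difficulty is not the reduction nor the psd sandwich, but the book-keeping in the non-commutative regime: $A(\omega)$ and $K_0(0)$ need not commute, so the expansion of $A(\omega)^2$ generates two separate cross terms $A(\omega)K_0(0)$ and $K_0(0)A(\omega)$, and after cosine weighting one is left with the asymmetric pair $K_0(2\delta)K_0(0)$ and $K_0(0)K_0(2\delta)$. Collapsing them into the single expression $(K_0(2\delta)+K_0(0))\expectation_\mu[A(\omega)]$ appearing in the statement is what forces the slightly loose constant $\|\variance_\mu[A(\omega)]\|_2$ (rather than $\tfrac12\|\variance_\mu[A(\omega)]\|_2$) in front of the variance, and requires absorbing the small $\expectation_\mu[\cos\inner{2\delta,\omega}]$-scaled remainder into the smooth term using the triangle inequality. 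Verifying that the residual term in (i) is indeed the only non-smooth contribution, and hence can be absorbed through the psd sandwich, is the technical heart of the argument.
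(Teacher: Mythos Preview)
Your reduction to $b_D(\delta)=\bigl\|\tfrac12\expectation_\mu[A(\omega)^2]+\tfrac12\expectation_\mu[\cos\inner{2\delta,\omega}A(\omega)^2]-K_0(\delta)^2\bigr\|_2$ via i.i.d.\ structure and the double-angle identity is correct and matches the paper. The gap is in the next step. If you expand $A(\omega)^2$ around $K_0(0)$ symmetrically as you propose, the cosine-weighted expectation produces
\[
\expectation_\mu\bigl[\cos\inner{2\delta,\omega}(A(\omega)-K_0(0))^2\bigr]+K_0(2\delta)K_0(0)+K_0(0)K_0(2\delta)-k_0(2\delta)\,K_0(0)^2,
\]
with $k_0(2\delta)=\expectation_\mu[\cos\inner{2\delta,\omega}]$ scalar. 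The deterministic part is therefore \emph{not} just $K_0(2\delta)\expectation_\mu[A(\omega)]$: you are left with the extra piece $K_0(0)K_0(2\delta)-k_0(2\delta)K_0(0)^2$, which in general neither vanishes nor is dominated by $\|\variance_\mu[A(\omega)]\|_2$, and which has no reason to be ``small'' relative to the smooth term. Your psd sandwich cleanly handles the centered-square residual, but the book-keeping you flag as the main obstacle does not close: the symmetric centering is precisely what creates the two-sided cross product and the scalar $k_0(2\delta)$ correction, and a triangle inequality at that point yields a bound with an additional $\tfrac12\|K_0(0)K_0(2\delta)-k_0(2\delta)K_0(0)^2\|_2$ term, not the stated inequality.

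The paper avoids this by using an \emph{asymmetric} covariance split instead of centering: write, entrywise,
\[
\expectation_\mu\bigl[\cos\inner{2\delta,\omega}A(\omega)^2\bigr]_{\ell m}=\sum_r\expectation_\mu\bigl[\cos\inner{2\delta,\omega}A(\omega)_{\ell r}\bigr]\expectation_\mu[A(\omega)_{rm}]+\sum_r\covariance{\cos\inner{2\delta,\omega}A(\omega)_{\ell r},A(\omega)_{rm}},
\]
which is exactly $K_0(2\delta)\expectation_\mu[A(\omega)]+\Sigma^{\cos}$ with a single residual matrix $\Sigma^{\cos}$. Doing the same for the unweighted term gives $K_0(0)\expectation_\mu[A(\omega)]+\variance_\mu[A(\omega)]$. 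Now the smooth part is already $(K_0(2\delta)+K_0(0))\expectation_\mu[A(\omega)]$ with no leftover cross terms, and one finishes with the triangle inequality together with $\|\Sigma^{\cos}\|_2\le\|\variance_\mu[A(\omega)]\|_2$. The moral: to land on the one-sided product appearing in the statement, split the product $\cos\inner{2\delta,\omega}A(\omega)\cdot A(\omega)$ asymmetrically via $\expectation[XY]=\expectation[X]\expectation[Y]+\mathrm{Cov}(X,Y)$ rather than centering $A(\omega)$ symmetrically.
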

The proof uses trigonometry properties and various properties of the moments and is given in \cref{sec:bd_application}. Now, we compute the upper bound given by \cref{pr:variance-bound} for the three kernels we have taken as examples.
% {\bf Case of decomposable kernel}\\
\setlist[enumerate,1]{leftmargin=0.98cm}
\begin{enumerate}[i)]
  \item \emph{Decomposable kernel:} notice that in the case of the Gaussian decomposable kernel, i.e. $A(\omega)=A$, $K_0(\delta)= Ak_0(\delta)$, $k_0(\delta) \geq 0$ and $k_0(\delta)=1$, then we have:
\begin{equation*}
b_D(\delta) \leq \frac{1}{2}(1+k_0(2\delta))\norm{A}_2 + k_0(\delta)^2
\end{equation*}
  \item \emph{curl-free and div-free kernels:} recall that in this case $p=d$. For the (Gaussian) curl-free kernel, $A(\omega)=\omega\omega^*$ where $\omega\in\mathbb{R}^d\sim\mathcal{N}(0, \sigma^{-2}I_d)$ thus $\expectation_\mu [A(\omega)] = I_d/\sigma^2$ and $\variance_{\mu}[A(\omega)]=(d+1)I_d/\sigma^4$ (see \citet{petersen2008matrix}). Hence,
\begin{equation*}
b_D(\delta) \leq \frac{1}{2}\norm{\frac{1}{\sigma^2}K_0(2\delta)-2 K_0(\delta)^2}_2 + \frac{(d+1)}{\sigma^4}
\end{equation*}
Eventually for the Gaussian divergence-free kernel, $A(\omega)=I\norm{\omega}_2^2-\omega\omega^*$, thus $\expectation_\mu [A(\omega)] = I_d(d-1)/\sigma^2$ and $ \variance_{\mu}[A(\omega)]=d(4d-3)I_d/\sigma^4$ (see \citet{petersen2008matrix}). Hence,
\begin{equation*}
b_D(\delta) \leq \frac{1}{2}\norm{\frac{(d-1)}{\sigma^2}K_0(2\delta)-2K_0(\delta)^2}_2+ \frac{d(4d-3)}{\sigma^4}
\end{equation*}
\end{enumerate}
An empirical illustration of these bounds is shown in \cref{fig:variance_error}.
\paragraph{Computing the Orlicz norm:}
For a random variable with strictly monotonic moment generating function (MGF),
one can characterize its $\psi_1$ Orlicz norm by taking the functional inverse of the MGF evaluated at 2.
In other words
\begin{equation*}
 \norm{X}_{\psi_1}^{-1}=\MGF(x)^{-1}_X(2).
\end{equation*}
For the Gaussian curl-free and divergence-free kernel \begin{equation*}
    \norm{A^{div}(\omega)}_2=\norm{A^{curl}(\omega)}_2=\norm{\omega}_2^2
\end{equation*}
where $\omega\sim\mathcal{N}(0,I_d/\sigma^2)$, hence $\norm{A(\omega)}_2\sim \Gamma(p/2,2/\sigma^2)$. The MGF of this gamma distribution is $\MGF(x)^{-1}(t)=(1-2t/\sigma^2)^{-(p/2)}$. Eventually
\begin{equation*}
 \norm{\norm{A^{div}(\omega)}_2}_{\psi_1}^{-1}=\norm{\norm{A^{curl}(\omega)}_2}_{\psi_1}^{-1}=\frac{\sigma^2}{2}\left(1-4^{-\frac{1}{p}}\right).
\end{equation*}

\section{Learning with ORFF}
\label{sec:learning_with_ORFF}
In practise, the previous bounds are however too large to find a safe value for $D$. In the following, numerical examples of ORFF-approximations are presented.
\subsection{Penalized regression with ORFF}
Once we have an approximated feature map, we can use it to provide a feature matrix of size $p'D \times p$ with matrix $B(\omega)$ of size $p \times p'$ such that $A(\omega)=B(\omega)B(\omega)^*$. A function $f \in \bmH_K$ is then approximated by a linear model
\begin{equation*}
 \tilde{f}(x)=\tilde{\Phi}(x)^*\theta,\enskip\text{where}\enskip\theta \in \mathbb{R}^{p'D}.
\end{equation*}
Let $\bmS = \{(x_i,y_i) \in \mathbb{R}^d \times \mathbb{R}^p, i=1, \ldots, N\}$ be a collection of i.i.d training samples.
Given a local loss function $L: \bmS\to \mathbb{R}^+$ and a $\ell_2$ penalty, we minimize
\begin{equation}
 \label{eq:opt_ridge}
 \bmL(\theta)=\frac{1}{N}\sum_{i=1}^N L\left(\tilde{\Phi}(x_i)^*\theta,y_i\right)+\lambda\norm{\theta}_2^2,
\end{equation}
instead of minimizing $\bmL(f)=\frac{1}{N}\sum_{i=1}^N L(f(x_i),y_i)+\lambda\norm{f}^2_{\mathcal{H}_K}$. To find a minimizer of the optimization problem \cref{eq:opt_ridge} many optimization algorithms are available. For instance, in large-scale context, a stochastic gradient descent algorithm would be be suitable: we can adapt the algorithm to the kind of kernel/problematic. We investigate two optimization algorithms: a Stein equation solver appropriate for the decomposable kernel and a (stochastic) gradient descent for non-decomposable kernels (e.g. the curl-free and divergence-free kernels).

\paragraph{Closed form for the decomposable kernel:}
for the real decomposable kernel $K_0(\delta)=k(\delta)A$ when $L(y,y')=\norm{y-y'}_2^2$ (Kernel Ridge regression in $\bmH_K$), the learning problem described in \cref{eq:opt_ridge} can be re-written in terms of matrices to find the unique minimizer $\Theta_*$, where $\text{vec}(\Theta)=\theta$ such that $\theta$ is a $p'D$ vector and $\Theta$ a $p'\times D$ matrix.
If $\tilde{\phi}$ is a feature map ($\tilde{\phi}(X)$ is a matrix of size $D\times N$) for the scalar kernel $k_0$, then
\begin{equation*}
 \tilde{\Phi}(x)^*\theta=(\tilde{\phi}(x)^*\otimes B)\theta=B\Theta\tilde{\phi}(x)
\end{equation*}
and
\begin{equation}
 \label{eq:stein}
 \theta_* = \argmin_{\Theta\in\mathbb{R}^{p'\times D}}\norm{B\Theta \tilde{\phi}(X) -Y}_F^2+\lambda\norm{\Theta}_F^2.
\end{equation}
This is a convex optimization problem and a sufficient condition is:
\begin{equation*}
\tilde{\phi}(X)\tilde{\phi}(X)^*\Theta_* B^*B - \tilde{\phi}(X)Y^*B + \lambda\Theta_* = 0,
\end{equation*}
which is a Stein equation.
\paragraph{Gradient computation for the general case.}
When it is not possible or desirable to use Stein's equations solver one can apply a (stochastic) gradient descent algorithm. The gradient computation for and $\ell_2$-loss applied to ORFF model is briefly recalled in \cref{subsec:implementation_detail}.
\subsection{Numerical illustration}
We present a few experiments to complete the theoretical contribution and illustrate the behavior of ORFF-regression. Other experimentalresults with noisy output data are shown in \cref{sec:more_simulation}.
\begin{figure}
 \centering
 \includegraphics[width=0.9\textwidth]{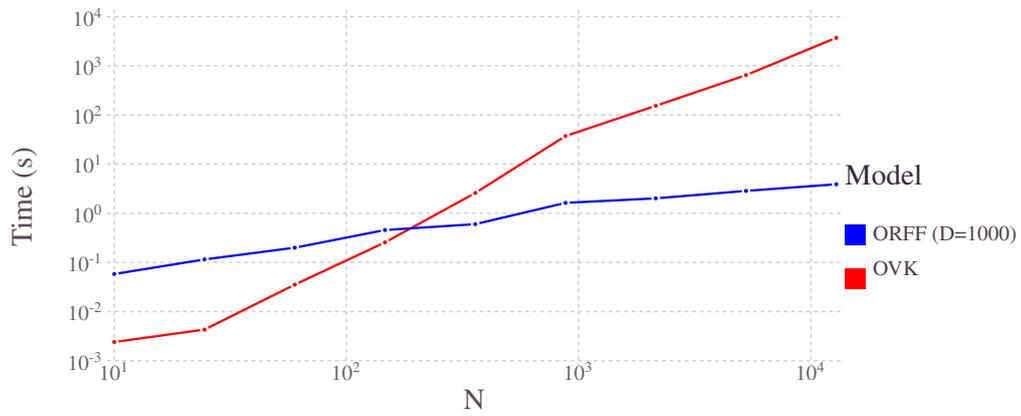}
 \caption[Computation time for ORFF and OVK on MNIST versus the number of datapoints $N$]{Computation time of ORFF and OVK on MNIST versus the number of datapoints $N$.}
 \label{fig:learning_time}
\end{figure}
\begin{figure}
 \centering
 % \begin{tabular}{cc}
 \includegraphics[width=0.9\textwidth]{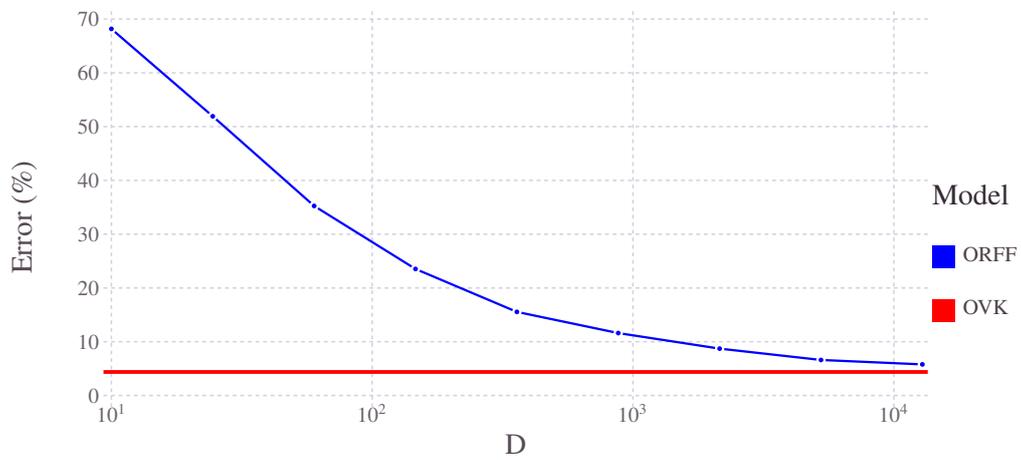}
 % \includegraphics[width=0.22\textwidth]{Figures/learn_time_MNIST_D.tikz}
 % \end{tabular}
 \caption[Prediction Error in percent on the MNIST dataset versus $D$, the number of Fourier features]{Prediction Error in percent on MNIST versus $D$, the number of Fourier features. In blue dashed line, ORFF and in red solid line OVK. For OVK and ORFF the number of datapoints $N=1000$.}
 \label{fig:learning_accuracy}
\end{figure}
\paragraph{Datasets:}
the first dataset is the handwritten digits recognition dataset MNIST\footnote{available at \url{http://yann.lecun.com/exdb/mnist}.}
We select a training set of $12000$ images and a test set of $10000$ images. The inputs are images represented as a vector $x_i\in[0,255]^{784}$ and the targets are integers between $0$ and $9$. First we scaled the inputs such that they take values in $[-1,1]^{784}$. Then we binarize the targets such that each number is represented by a unique binary vector of length $10$. To predict classes, we use simplex coding method presented in \citet{mroueh2012multiclass}. The intuition behind simplex coding is to project the binarized labels of dimension $p$ onto the most separated vectors on the hypersphere of dimension $p-1$. For ORFF we can encode directly this projection in the $B$ matrix of the decomposable kernel $K_0(\delta)=B B^* k_0(\delta)$ where $k_0$ is a Gaussian kernel. For OVK we project the binarized targets on the simplex as a preprocessing step, before learning with the kernel $K_0(\delta)=I_p k_0(\delta)$, where $k_0$ is a also Gaussian kernel.
\par
The second dataset corresponds to a 2D-vector field with structure. We generated a scalar field as a mixture of five Gaussians located at $[0,0]$, $[0,1]$, $[0,-1]$, with positive values and at $[-1,0]$, $[1, 0]$ with negative values. The curl-free field has been generated by taking the gradient of the scalar-field, and the divergence-free field by taking the orthogonal of the curl-free field. These 2D-datasets are depicted in \cref{fig:curl-div-fields}.
%\cref{fig:curl-div}.
\paragraph{Approximation:} We trained both an ORFF and an OVK model on the handwritten digits recognition dataset (MNIST) with a decomposable Gaussian kernel with signature $K_0(\delta)=\exp(-\norm{\delta}/\sigma^2)A$.
To find a solution of the optimization problem described in \cref{eq:stein}, we use off-the-shelf solver\footnote{ \url{http://ta.twi.tudelft.nl/nw/users/gijzen/IDR.html}} able to handle Stein's equation. For both methods we choose $\sigma=20$ and use a $2$-fold cross validation on the training set to select the optimal $\lambda$. First, \cref{fig:learning_time} shows the running time comparison between OVK and ORFF models using $D=1000$ Fourier features against the number of data\-points $N$. The log-log plot shows ORFF scaling better than the OVK w.r.t the number of points.
Second, \cref{fig:learning_accuracy} shows the test prediction error versus the number of ORFFs $D$, when using $N=1000$ training points. As expected, the ORFF model converges toward the OVK model when the number of features increases.
\paragraph{Independent (RFF) prediction vs Structured prediction on vector fields:}
we perform a similar experiment over a simulated dataset designed for learning a 2D-vector field with structure. \Cref{fig:curl-div} reports the Mean Squared Error versus the number of ORFF $D$. For this experiment we use a Gaussian curl-free kernel and tune its $\sigma$ hyperparameter as well as the $\lambda$ on a grid. The curl-free ORFF outperforms classic RFFs by tending more quickly towards the noise level.
\begin{figure}
 \centering
 \includegraphics[width=0.9\textwidth]{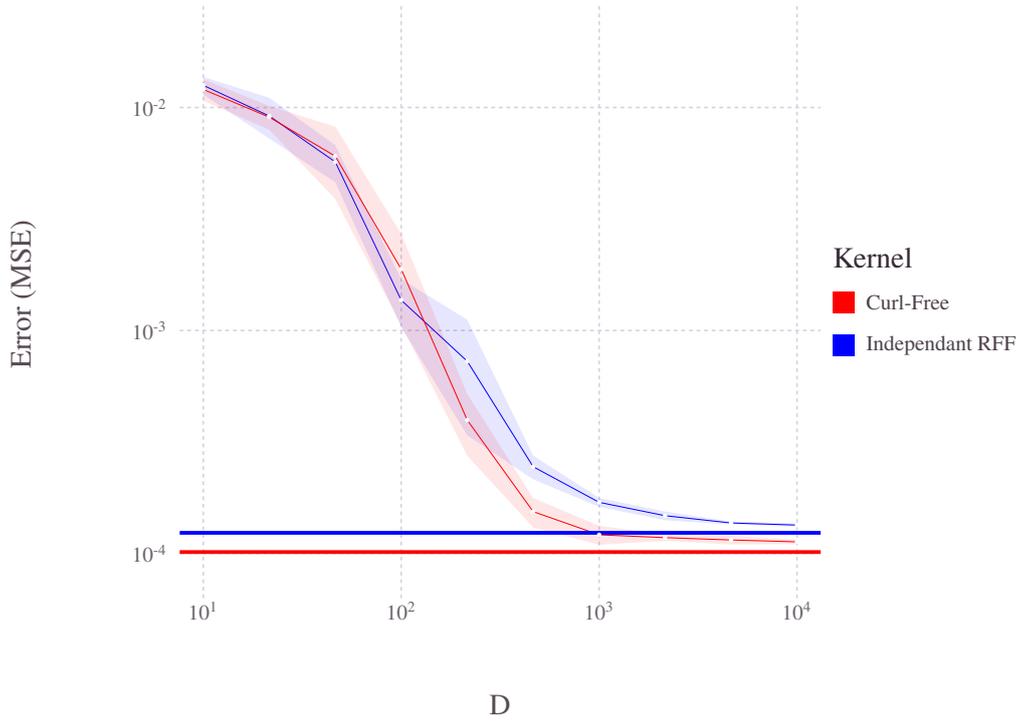}
 \caption[Mean squared (test) error on the synthetic data versus number of Fourier features $D$]{Mean squared (test) error on the synthetic data versus number of Fourier features $D$. The solid lines represent decomposable (blue) and curl (red) OVK methods while the dotted lines represent decomposable (blue) and curl (red) ORFF methods.}
 \label{fig:curl-div}
\end{figure}
\Cref{fig:learning_time_curl} shows the computation time between curl-ORFF and curl-OVK indicating that the OVK solution does not scale to large datasets, while ORFF scales well with when the number of data increases. When $N>10^4$ exact OVK is not able to be trained in reasonable time ($>1000$ seconds).
\begin{figure}
 \centering
 \includegraphics[width=0.9\textwidth]{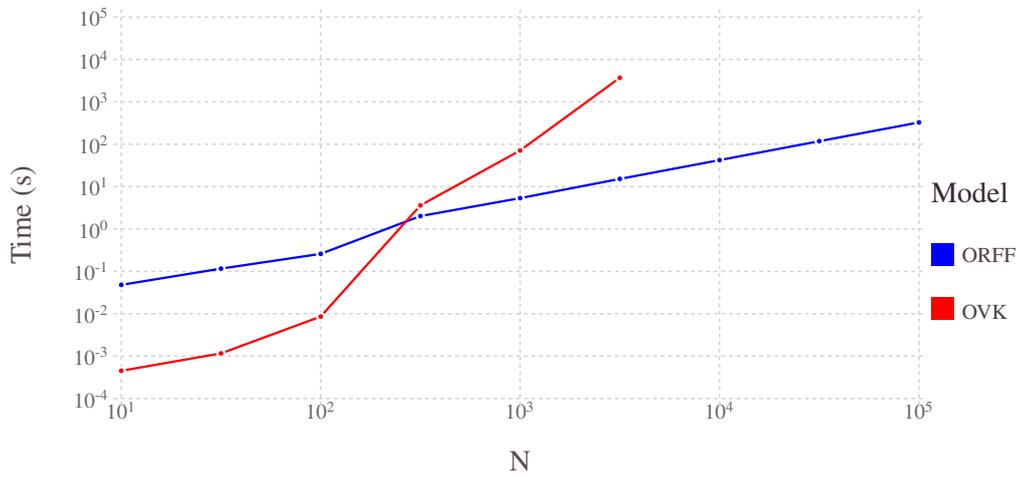}
 \caption[Computation time of curl-ORFF and curl-OVK versus the number of datapoints on synthetic data]{Computation time of curl-ORFF and curl-OVK versus the number of datapoints on synthetic data. We fixed $D=1000$ Fourier features and study the computation time w.r.t the number of points.}
 \label{fig:learning_time_curl}
\end{figure}
\section{Conclusion}
\label{sec:conclusion}
We introduced a general and versatile framework for operator-valued kernel approximation with Operator Random Fourier Features. We showed the uniform convergence of these approximations by proving a matrix concentration inequality for bounded and unbounded ORFFs. The complexity in time of these approximations together with the linear learning algorithm make this implementation scalable with the number of data and therefore interesting compared to OVK regression. The numerical illustration shows the behavior expected from theory. ORFFs are especially a very promising approach in vector field learning or on noisy datasets.
 Another appealing direction is to use this architecture to automatically learn operator-valued kernels by learning a mixture of ORFFs in order to choose appropriate kernels, a working direction closely related to the recent method called ``Alacarte'' \citep{Yang2015} based on the very efficient ``FastFood'' method \citep{Le2013} for scalar kernels. Finally this work opens the door to building deeper architectures by stacking vector-valued functions while keeping a kernel view for large datasets.

\bibliography{main_axiv.bib}
\bibliographystyle{abbrvnat}

\cleardoublepage
\newgeometry{
 twoside,
 top=20mm,
 inner=25mm,
 outer=25mm,
 bottom=40mm,
 headheight=15pt,
 headsep=5mm,
 footnotesep=5mm,
 footskip=20mm - 1em}

\footnotesize
\appendix
\label{appendix}

\section{Reminder on Random Fourier Feature in the scalar case}
\cite{Rahimi2007} proved the uniform convergence of Random Fourier Feature (RFF) approximation for a scalar shift invariant kernel.
\begin{theorem}[Uniform error bound for RFF, \citet{Rahimi2007}]\label{rff-scalar-bound}

Let $\mathcal{C}$ be a compact of subset of $\mathbb{R}^d$ of diameter $l$. Let $k$ a shift invariant kernel, differentiable with a bounded first derivative and $\mu$ its normalized inverse Fourier transform. Let $D$ the dimension of the Fourier feature vectors. Then, for the mapping $\tilde{\phi}$ described in \cref{sec:orff}, we have :
\begin{equation}
\probability\left\{\sup_{x,z\in\mathcal{C}}\norm{\tilde{k}(x,z)-k(x,z)}_2\ge \epsilon \right\} \le 2^8\left( \frac{d\sigma l}{\epsilon} \right)^2\exp\left( -\frac{\epsilon^2D}{4(d+2)} \right)
\end{equation}
\end{theorem}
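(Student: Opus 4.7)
My plan is to follow the classical Rahimi-Recht $\epsilon$-net / Lipschitz argument in three phases: reduce to the signature, control the error on a finite net, and optimize the covering radius.

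First I would exploit shift-invariance and set $s(\delta)=\tilde{k}_0(\delta)-k_0(\delta)$ with $\delta=x-z$ ranging over $\mathcal{C}-\mathcal{C}$, a compact set of diameter at most $2l$. I would cover this set by $T\le(4l/r)^d$ Euclidean balls of radius $r$ (standard volume-ratio argument in $\mathbb{R}^d$) with centers $\delta_1,\ldots,\delta_T$, so that $\sup_\delta|s(\delta)|\le\max_i|s(\delta_i)|+rL_s$ where $L_s$ is the Lipschitz constant of $s$ on the difference set. At each anchor, $s(\delta_i)$ is a centered mean of $D$ i.i.d.\ variables $\cos\inner{\delta_i,\omega_j}-k_0(\delta_i)$ valued in $[-1,1]$; Hoeffding's inequality then yields $\probability\{|s(\delta_i)|\ge\epsilon/2\}\le 2\exp(-D\epsilon^2/8)$, and a union bound over the net controls $\max_i|s(\delta_i)|$.

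Next I would handle the Lipschitz term. Writing $\nabla\tilde{k}_0(\delta)=-(1/D)\sum_j\omega_j\sin\inner{\delta,\omega_j}$ and $\nabla k_0(\delta)=-\expectation_\mu[\omega\sin\inner{\delta,\omega}]$, and using $|\sin|\le 1$, Cauchy-Schwarz gives $\|\nabla\tilde{k}_0\|^2\le(1/D)\sum_j\|\omega_j\|^2$ uniformly in $\delta$, so $\expectation[L_{\tilde{k}_0}^2]\le\sigma^2:=\expectation_\mu\|\omega\|^2$, and Jensen analogously gives $L_{k_0}\le\expectation_\mu\|\omega\|\le\sigma$. Hence $\expectation[L_s^2]\le 4\sigma^2$, and Markov yields $\probability\{L_s\ge\epsilon/(2r)\}\le 16\sigma^2r^2/\epsilon^2$. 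A union bound of the two bad events then produces, for explicit constants $c_1,c_2$, an estimate of the form
\begin{equation*}
\probability\{\sup_\delta|s(\delta)|\ge\epsilon\}\le c_1(l/r)^d\exp(-D\epsilon^2/8)+c_2\sigma^2r^2/\epsilon^2.
\end{equation*}

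Finally I would optimize over $r>0$. The elementary calculation $\min_r(ar^{-d}+br^2)$ is attained at $r_\star=(ad/(2b))^{1/(d+2)}$ with minimum proportional to $a^{2/(d+2)}b^{d/(d+2)}$. Substituting $a\propto l^d\exp(-D\epsilon^2/8)$ and $b\propto\sigma^2/\epsilon^2$ turns the exponential into $\exp(-D\epsilon^2/(4(d+2)))$, the denominator $4(d+2)$ coming directly from the $2/(d+2)$ exponent applied to the $1/8$ of Hoeffding, while the polynomial part becomes a constant multiple of $(\sigma l/\epsilon)^{2d/(d+2)}$. The hard part will be the final bookkeeping: loosening this fractional power to the clean square $(\sigma l/\epsilon)^2$, tracking the factor of $d$ produced by the optimum, and verifying that the product of the covering constant, the Hoeffding constant and the Markov constant collapses into the compact prefactor $2^8(d\sigma l/\epsilon)^2$ stated in the theorem. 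This is not conceptually deep but requires attention to the exact covering number and to the inequalities used when passing from fractional to integer exponents.
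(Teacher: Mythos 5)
Your outline is the classical Rahimi--Recht argument, and it is essentially the same route the paper takes: the paper does not actually reprove this scalar theorem (it imports it verbatim from \citet{Rahimi2007} and only proves the decomposable corollary from it), but your three phases --- covering the difference set $\mathcal{C}-\mathcal{C}$ with $T\le(4l/r)^d$ balls, Hoeffding plus a union bound at the anchors, Markov on the Lipschitz constant, and then minimizing $ar^{-d}+br^2$ at $r_\star=(ad/(2b))^{1/(d+2)}$ --- are exactly the skeleton of the paper's own Appendix B proof of the operator-valued generalization, with Bernstein in place of Hoeffding. Two points of bookkeeping deserve attention if you want to land on the stated constant. First, your Lipschitz step bounds $L_s\le L_{\tilde k_0}+L_{k_0}$ and concludes $\mathbb{E}[L_s^2]\le 4\sigma^2$, hence $\mathbb{P}\{L_s\ge\epsilon/(2r)\}\le 16\sigma^2r^2/\epsilon^2$; the standard argument (and the one the paper uses in its Appendix B analogue) instead justifies differentiation under the integral sign so that $\mathbb{E}[\nabla\tilde k_0]=\nabla k_0$, giving $\mathbb{E}[L_s^2]=\mathbb{E}\|\nabla\tilde k_0\|^2-\|\nabla k_0\|^2\le\sigma^2$ and the sharper $4\sigma^2r^2/\epsilon^2$; your cruder bound is valid but inflates the final prefactor by $4^{d/(d+2)}$, so the exact $2^8$ will not come out without tightening this step. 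Second, the passage from $(\sigma l/\epsilon)^{2d/(d+2)}$ to $(\sigma l/\epsilon)^2$ only goes in the right direction when $\sigma l/\epsilon\ge1$; as in the original paper, one must dispose of the complementary regime by noting the bound is then trivial. Neither issue changes the architecture of the proof, which is correct.
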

From \cref{rff-scalar-bound}, we can deduce the following corollary about the uniform convergence of the ORFF approximation of the decomposable kernel.

\begin{corollary}[Uniform error bound for decomposable ORFF]\label{sec:dec-bound}
Let $\mathcal{C}$ be a compact of subset of $\mathbb{R}^d$ of diameter $l$. $K_{dec}$ is a decomposable kernel built from a $p \times p$ semi-definite matrix $A$ and $k$, a shift invariant and differentiable kernel whose first derivative is bounded. Let $\tilde{k}$ the Random Fourier approximation for the scalar-valued kernel $k$. We recall that: for a given pair $(x,z) \in \bmC$, $\tilde{K}(x,z)= \tilde{\Phi}(x)^* \tilde{\Phi}(z)=A\tilde{k}(x,z)$ and $K_0(x-z)=A E_{\mu}[\tilde{k}(x,z)]$.
\begin{equation*}
\probability\left\{\sup_{x,z\in\mathcal{C}}\norm{\tilde{K}(x,z)-K(x,z)}_2\ge \epsilon \right\} \le 2^8\left( \frac{d\sigma \norm{A}_2 l}{\epsilon} \right)^2\exp\left( -\frac{\epsilon^2D}{4\norm{A}_2^2(d+2)} \right)
\end{equation*}
\begin{proof}
The proof directly extends \ref{rff-scalar-bound} given by \cite{Rahimi2007}. Since
\begin{equation*}
    \sup_{x,z\in\mathcal{C}}\norm{\tilde{K}(x,z)-K(x,z)}_2 = \sup_{x,z\in\mathcal{C}} \norm{A}_2. \abs{\tilde{K}(x,z)-k(x,z)}
\end{equation*}
and then, taking $\epsilon' = \norm{A}_2 \epsilon$ gives the following result for all positive $\epsilon'$:
\begin{equation*}
\probability\left\{\sup_{x,z\in\mathcal{C}}\norm{A(\tilde{K}(x,z)-k(x,z))}_2\ge \epsilon' \right\} \le 2^8\left( \frac{d\sigma \norm{A}_2 l}{\epsilon'} \right)^2\exp\left( -\frac{\epsilon'^2D}{4\norm{A}_2^2(d+2)} \right)
\end{equation*}
\end{proof}
\end{corollary}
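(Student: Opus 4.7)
The plan is to reduce the matrix-valued concentration problem entirely to the scalar Random Fourier Feature bound recalled in \cref{rff-scalar-bound}, by exploiting the special structure of the decomposable kernel: $A(\omega)$ does not depend on $\omega$, so the error at every pair $(x,z)$ is just a scalar multiple of the same fixed matrix $A$.

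First, I would make the factorization explicit. Because $K_0(\delta)=A\,k_0(\delta)$ and the ORFF construction uses $B(\omega)^*B(\omega)=A$ with the same probability measure $\mu$ as the scalar Fourier inverse of $k_0$, the Monte-Carlo formula for $\tilde{K}$ coincides with $\tilde{K}(x,z)=A\,\tilde{k}(x,z)$, where $\tilde{k}$ is the usual scalar RFF estimator given by \cref{eq:rff}. Hence
\begin{equation*}
\tilde{K}(x,z)-K(x,z) = A\bigl(\tilde{k}(x,z)-k(x,z)\bigr),
\end{equation*}
which is a scalar multiple of the fixed symmetric matrix $A$.

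Second, I would use the elementary identity $\norm{cA}_2=\abs{c}\norm{A}_2$ for a scalar $c$ to pull the matrix norm out of the supremum:
\begin{equation*}
\sup_{x,z\in\mathcal{C}}\norm{\tilde{K}(x,z)-K(x,z)}_2 \;=\; \norm{A}_2\cdot\sup_{x,z\in\mathcal{C}}\,\abs{\tilde{k}(x,z)-k(x,z)}.
\end{equation*}
Therefore the event $\{\sup_{x,z}\norm{\tilde{K}(x,z)-K(x,z)}_2\geq\epsilon\}$ is \emph{exactly} the event $\{\sup_{x,z}\abs{\tilde{k}(x,z)-k(x,z)}\geq\epsilon/\norm{A}_2\}$, provided $\norm{A}_2>0$ (the degenerate case $A=0$ is trivial).

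Third, I would apply the scalar uniform bound (\cref{rff-scalar-bound}) to $\tilde{k}$ with the rescaled tolerance $\epsilon'=\epsilon/\norm{A}_2$. Substituting $\epsilon'$ into the right-hand side of the scalar bound immediately produces the prefactor $2^8\bigl(d\sigma\norm{A}_2 l/\epsilon\bigr)^2$ and the exponent $-\epsilon^2 D/\bigl(4\norm{A}_2^2(d+2)\bigr)$, which is exactly the claimed inequality. The differentiability and bounded-derivative hypotheses on $k$ are inherited unchanged, so the hypotheses of \cref{rff-scalar-bound} hold verbatim.

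I do not expect any real obstacle: the only substantive point is the factorization in the first step, and the rest is a change of variable in the scalar bound. The proof essentially reproduces the argument sketched in the paper's proof block, which is why the result is stated as a corollary rather than a standalone theorem.
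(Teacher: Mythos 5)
Your proposal is correct and follows essentially the same route as the paper: factor the error as $A(\tilde{k}(x,z)-k(x,z))$, pull $\norm{A}_2$ out of the supremum, and apply the scalar Rahimi--Recht bound at the rescaled tolerance $\epsilon/\norm{A}_2$. Your change of variable is stated in the inverse direction from the paper's ($\epsilon'=\epsilon/\norm{A}_2$ versus $\epsilon'=\norm{A}_2\epsilon$), but both substitutions yield the identical final inequality, and your explicit handling of the degenerate case $A=0$ is a harmless addition.
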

Please note that a similar corollary could have been obtained for the recent result of \citet{sutherland2015} who refined the bound proposed by Rahimi and Recht by using a Bernstein concentration inequality instead of the Hoeffding inequality.

\section{Proof of the uniform error bound for ORFF approximation}
This section present a proof of \cref{pr:ov-rff_concentration}.
\label{subsec:concentration_proof}
\paragraph{}
We note $\delta=x-z$, $\tilde{K}(x,z)=\tilde{\Phi}(x)^*\tilde{\Phi}(z)$, $\tilde{K}_j(x,z)=\tilde{\Phi}_j(x)^*\tilde{\Phi}_j(z)$ and $K_0(\delta)=K(x,z)$.
For sake of simplicity, we use the following notation:
\begin{eqnarray*}
F(\delta)&=&F(x-z)=\tilde{K}(x,z)-K(x,z)\\
F_j(\delta)&=&F_j(x-z)=(\tilde{K}_j(x,z)-K(x,z))/D
\end{eqnarray*}

Compared to the scalar case, the proof follows the same scheme as the one described in \citep{Rahimi2007, sutherland2015} but requires to consider matrix norms and appropriate matrix concentration inequality. The main feature of \cref{pr:ov-rff_concentration} is that it covers the case of bounded ORFF as well as unbounded ORFF: in the case of bounded ORFF, a Bernstein inequality for matrix concentration such that the one proved in \cite{Mackey2014} (Corollary 5.2) or the formulation of \cite{Tropp} recalled in \cite{Koltchinskii2013remark} is suitable. However some kernels like the curl and the div-free kernels do not have bounded $\norm{F_j}$ but exhibit $F_j$ with subexponential tails. Therefore, we will use a Bernstein matrix concentration inequality adapted for random matrices with subexponential norms (\cite{Koltchinskii2013remark}).

% \begin{figure*}[ht]
%   \centering
%   \begin{tabular}{cc}
%   \includegraphics[width=0.45\textwidth]{Figures/variance_dec.tikz} & \includegraphics[width=0.45\textwidth]{Figures/variance_curl.tikz}
%   \end{tabular}
%   \caption{Upper bound on the variance of the decomposable and curl-free kernel obtained via ORFF. We generated a random point in $[-1,1]^4$ and computed the empirical variance of the estimator (red line). We also plotted the theoretical bound proposed in \cref{pr:ov-rff_variance_bound}.}
%   \label{fig:variance_error}
% \end{figure*}

\subsection{Epsilon-net}
Let $\mathcal{D}_{\bmC}=\left\{ x-z \quad\middle|\quad x,z\in\mathcal{C} \right\}$ with diameter at most $2l$ where $l$ is the diameter of $\mathcal{C}$. Since $\mathcal{C}$ is supposed compact, so is $\mathcal{D}_{\bmC}$. It is then possible to find an $\epsilon$-net covering $\mathcal{D}_{\bmC}$ with at most $T=(4l/r)^d$ balls of radius $r$.
\par
Let us call $\delta_i,i=1,\ldots,T$ the center of the $i$-th ball, also called anchor of the $\epsilon$-net. Denote $L_{F}$ the Lipschitz constant of $F$. Let $\norm{.}$ be the $\ell_2$ norm on $\bmL(\mathbb{R}^p)$, that is the spectral norm. Now let use introduce the following lemma:
\begin{lemma}
$\forall \delta \in \bmD_{\bmC}$, if (1): $L_{F}\le\frac{\epsilon}{2r}$ and (2): $\norm{F(\delta_i)}\le\frac{\epsilon}{2}$,for all $0<i<T$, then $\norm{F(\delta)} \leq \epsilon$.
\end{lemma}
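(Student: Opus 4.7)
The plan is to run the standard $\epsilon$-net discretization argument: reduce the uniform bound on the whole compact $\mathcal{D}_{\mathcal{C}}$ to a finite bound on the net centers, paying a small error from the Lipschitz regularity of $F$ as we move from an arbitrary point to the nearest anchor.

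Concretely, I would fix an arbitrary $\delta\in\mathcal{D}_{\mathcal{C}}$ and, using that the balls of radius $r$ centered at $\delta_1,\ldots,\delta_T$ cover $\mathcal{D}_{\mathcal{C}}$, choose an index $i(\delta)\in\{1,\ldots,T\}$ such that $\|\delta-\delta_{i(\delta)}\|\le r$. Then I would decompose via the triangle inequality in the spectral norm:
\begin{equation*}
\|F(\delta)\| \;\le\; \|F(\delta)-F(\delta_{i(\delta)})\| \;+\; \|F(\delta_{i(\delta)})\|.
\end{equation*}

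For the first term I invoke hypothesis (1), namely that $F$ is Lipschitz on $\mathcal{D}_{\mathcal{C}}$ with constant $L_F\le \epsilon/(2r)$, which yields
\begin{equation*}
\|F(\delta)-F(\delta_{i(\delta)})\|\;\le\; L_F\,\|\delta-\delta_{i(\delta)}\|\;\le\; \frac{\epsilon}{2r}\cdot r \;=\; \frac{\epsilon}{2}.
\end{equation*}
For the second term I invoke hypothesis (2) applied at the anchor $\delta_{i(\delta)}$, which gives $\|F(\delta_{i(\delta)})\|\le \epsilon/2$. Summing the two contributions yields $\|F(\delta)\|\le \epsilon$, and since $\delta$ was arbitrary in $\mathcal{D}_{\mathcal{C}}$, the conclusion follows.

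There is no real obstacle here: the only subtlety is choosing the net on $\mathcal{D}_{\mathcal{C}}=\{x-z:x,z\in\mathcal{C}\}$ rather than on $\mathcal{C}$ itself, which is harmless because $\mathcal{D}_{\mathcal{C}}$ is compact with diameter at most $2l$, so a covering with $T=(4l/r)^d$ balls of radius $r$ exists (this is exactly the $T$ used in the statement). The genuine work of the theorem — controlling the Lipschitz constant $L_F$ and the pointwise quantities $\|F(\delta_i)\|$ via matrix Bernstein — comes in the subsequent steps of the proof and is not needed for this lemma.
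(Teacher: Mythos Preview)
Your proof is correct and follows essentially the same approach as the paper: triangle inequality to split $\|F(\delta)\|$ into $\|F(\delta)-F(\delta_i)\|+\|F(\delta_i)\|$, then bound the first term by $rL_F\le\epsilon/2$ via Lipschitz continuity and the second by $\epsilon/2$ via hypothesis (2). If anything, you are slightly more explicit than the paper in choosing the nearest anchor $\delta_{i(\delta)}$ before applying the Lipschitz bound.
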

\begin{proof}
$\norm{F(\delta)}=\norm{F(\delta)-F(\delta_i)+F(\delta_i)}\le\norm{F(\delta)-F(\delta_i)}+\norm{F(\delta_i)}$, for all $0<i<T$. Using the Lipschitz continuity of $F$ we have $\norm{F(\delta)-F(\delta_i)}\le L_{F}\norm{\delta-\delta_i}\le rL_{F}$ hence
$\norm{F(\delta)} \le rL_{F} + \norm{F(\delta_i)}$.
\end{proof}
To apply the lemma, we must bound the Lipschitz constant of the matrix-valued function $F$ (condition (1)) and $\norm{F(\delta_i)}$, for all $i=1, \ldots, T$ as well (condition (2)).
\subsection{Regularity condition}
\label{subsubsec:regularity}
We first establish that $\frac{\partial}{\partial \delta}\expectation \tilde{K}(\delta) = \expectation \frac{\partial}{\partial \delta}\tilde{K}(\delta)$. Since $\tilde{K}$ is a finite dimensional matrix-valued function, we verify the integrability coefficient-wise, following \citet{sutherland2015}'s demonstration. Namely, without loss of generality we show
\begin{equation*}
  \left[\frac{\partial}{\partial \delta}\expectation \tilde{K}(\delta)\right]_{lm} = \expectation \frac{\partial}{\partial \delta} \left[\tilde{K}(\delta)\right]_{lm}
\end{equation*}
where $[A]_{lm}$ denotes the $l$-th row and $m$-th column element of the matrix A.
\begin{proposition}[Differentiation under the integral sign]
\label{pr:diff_under_int}
Let $\mathcal{X}$ be an open subset of $\mathbb{R}^d$ and $\Omega$ be a measured space. Suppose that the function $f:\mathcal{X}\times\Omega\to\mathbb{R}$ verifies the following conditions:
\begin{itemize}
  \item $f(x,\omega)$ is a measurable function of $\omega$ for each $x$ in $\mathcal{X}$.
  \item For almost all $\omega$ in $\Omega$, the derivative $\partial f(x, \omega)/\partial x_i$ exists for all $x$ in $\mathcal{X}$.
  \item There is an integrable function $\Theta:\Omega\to\mathbb{R}$ such that $\abs{\partial f(x, \omega)/\partial x_i}\le\Theta(\omega)$ for all $x$ in $\mathcal{X}$.
\end{itemize}
Then
\begin{equation*}
  \frac{\partial}{\partial x_i} \int_\Omega f(x,\omega)d\omega = \int_\Omega \frac{\partial}{\partial x_i}f(x,\omega)d\omega.
\end{equation*}
\end{proposition}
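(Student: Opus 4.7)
The plan is the classical Leibniz rule argument: express the partial derivative of $F(x)=\int_{\Omega}f(x,\omega)\,d\omega$ as a pointwise limit of difference quotients, dominate those quotients uniformly in the integration variable by $\Theta$, and then exchange limit and integral via the dominated convergence theorem.

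Concretely, I would fix $x\in\mathcal{X}$ and an index $i\in\{1,\dots,d\}$. Because $\mathcal{X}$ is open, choose $h_0>0$ so that $x+he_i\in\mathcal{X}$ for every $|h|\le h_0$, where $e_i$ is the $i$-th canonical basis vector. For $0<|h|\le h_0$ define
\begin{equation*}
g_h(\omega) = \frac{f(x+he_i,\omega)-f(x,\omega)}{h}.
\end{equation*}
This is measurable in $\omega$ as a linear combination of measurable functions. By the hypothesis on pointwise differentiability, for $\mu$-almost every $\omega$ we have $g_h(\omega)\to \partial f/\partial x_i(x,\omega)$ as $h\to 0$. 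To dominate $g_h$, I would apply the one-variable mean value theorem to $t\mapsto f(x+te_i,\omega)$ on the interval with endpoints $0$ and $h$: for almost every $\omega$ there exists $\xi=\xi(\omega,h)$ strictly between $0$ and $h$ such that $g_h(\omega)=\partial f/\partial x_i(x+\xi e_i,\omega)$, and the hypothesis on $\Theta$ gives $|g_h(\omega)|\le \Theta(\omega)$ uniformly in $h$.

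With pointwise a.e.\ convergence and the $\mu$-integrable envelope $\Theta$, the dominated convergence theorem yields
\begin{equation*}
\lim_{h\to 0}\int_{\Omega} g_h(\omega)\,d\omega \;=\; \int_{\Omega} \frac{\partial f}{\partial x_i}(x,\omega)\,d\omega .
\end{equation*}
The left-hand side is the difference quotient $[F(x+he_i)-F(x)]/h$, whose limit is by definition $\partial F/\partial x_i(x)$, which is the claim.

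The only mildly subtle point—and essentially the whole work of the proof—is the application of the mean value theorem uniformly enough in $\omega$ to obtain the envelope: one must observe that the $\mu$-null set on which either differentiability or the bound on $|\partial f/\partial x_i|$ fails is independent of $h$, so that the entire segment $\{x+te_i:|t|\le h_0\}$ lies in the "good" differentiability set outside a fixed null set and the envelope $\Theta$ can be used simultaneously for all sufficiently small $h$. Once this is in place, measurability of $g_h$ and the dominated convergence theorem finish the proof with no further calculation.
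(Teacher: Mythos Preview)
The paper does not actually prove this proposition: it is stated as a known, classical result (the Leibniz rule for differentiation under the integral sign) and then immediately applied to the function $\tilde{G}_{x,y}^{i,l,m}$. Your argument via the mean value theorem and the dominated convergence theorem is correct and is precisely the standard textbook proof of this fact, so there is nothing to compare.
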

Define the function $\tilde{G}_{x,y}^{i,l,m}(t,\omega):\mathbb{R}\times\Omega\to\mathbb{R}$ by $\tilde{G}_{x,y}^{i,l,m}(t,\omega)=\left[\tilde{K}(x+te_i-y)\right]_{lm}=\left[\tilde{G}_{x,y}^{i}(t,\omega)\right]_{lm}$, where $e_i$ is the $i$-th standard basis vector.
Then $\tilde{G}_{x,y}^{i,l,m}$ is integrable w.r.t. $\omega$ since
\begin{equation*}
  \int_\Omega \tilde{G}_{x,y}^{i,l,m}(t,\omega) d\omega= \expectation \left[\tilde{K}(x+te_i-y)\right]_{lm}=\left[K(x+te_i-y)\right]_{lm} < \infty.
\end{equation*}
Additionally for any $\omega$ in $\Omega$, $\partial/\partial t \tilde{G}_{x,y}^{i,l,m}(t,\omega)$ exists and satisfies
\begin{equation*}\scriptsize
  \begin{aligned}
    \expectation \abs{\frac{\partial}{\partial t}\tilde{G}_{x,y}^{i,l,m}(t,\omega)}
    &= \expectation \abs{\frac{1}{D}\sum_{j=1}^DA(\omega)_{lm}\left(\sin\inner{y,\omega_j}\frac{\partial}{\partial t}\sin(\inner{x,\omega_j}+t\omega_{ij})+\cos\inner{y,\omega_j}\frac{\partial}{\partial t}\cos(\inner{x,\omega_j}+t\omega_{ij})\right)} \\
    &=\expectation \abs{\frac{1}{D}\sum_{j=1}^DA(\omega)_{lm}\left(\omega_{ji}\sin\inner{y,\omega_j}\sin(\inner{x,\omega_j}+t\omega_{ji})-\omega_{ji}\cos\inner{y,\omega_j}\cos(\inner{x,\omega_j}+t\omega_{ji})\right)} \\
    &\le\expectation\left[ \frac{1}{D}\sum_{j=1}^D\abs{A(\omega)_{lm}\omega_{ji}\sin\inner{y,\omega_j}\sin(\inner{x,\omega_j}+t\omega_{ji})}+\abs{A(\omega)_{lm}\omega_{ji}\cos\inner{y,\omega_j}\cos(\inner{x,\omega_j}+t\omega_{ji})}\right] \\
    &\le \expectation \left[\frac{1}{D}\sum_{j=1}^D2\abs{A(\omega)_{lm}\omega_{ji}}\right].
  \end{aligned}
\end{equation*}
Hence
\begin{equation*}
  \begin{aligned}
  \expectation \abs{\frac{\partial}{\partial t}\tilde{G}_{x,y}^{i}(t,\omega)} &\le 2\expectation \left[\abs{\omega \otimes A(\omega)}\right].
  \end{aligned}
\end{equation*}
which is assumed to exist since in finite dimensions all norms are equivalent and $\expectation_{\mu}\left[\norm{\omega}^2\norm{A(\omega)}^2\right]$ is assume to exists.
Thus applying \cref{pr:diff_under_int} we have $\left[\frac{\partial}{\partial \delta_i}\expectation \tilde{K}(\delta)\right]_{lm} = \expectation \frac{\partial}{\partial \delta_i} \left[\tilde{K}(\delta)\right]_{lm}$ The same holds for $y$ by symmetry.
Combining the results for each component $x_i$ and for each element $lm$, we get that $\frac{\partial}{\partial \delta}\expectation \tilde{K}(\delta) = \expectation \frac{\partial}{\partial \delta}\tilde{K}(\delta)$.

\subsection{Bounding the Lipschitz constant}
\paragraph{}
Since $F$ is differentiable, $L_{F}=\norm{\frac{\partial F}{\partial \delta}(\delta^*)}$ where $\delta^*=\argmax_{\delta\in\bmD_{\bmC}}\norm{\frac{\partial F}{\partial \delta}(\delta)}$.
\begin{equation*}
\begin{aligned}
\expectation_{\mu,\delta^*}\left[ L_f^2 \right] &= \expectation_{\mu,\delta^*} \norm{\frac{\partial \tilde{K}}{\partial \delta}(\delta^*)-\frac{\partial K_0}{\partial \delta}(\delta^*)}^2 \\
&\le \expectation_{\delta^*}\left[ \expectation_{\mu} \norm{\frac{\partial \tilde{K}}{\partial \delta}(\delta^*)}^2 - 2\norm{\frac{\partial K_0}{\partial \delta}(\delta^*)}\expectation_{\mu} \norm{\frac{\partial \tilde{K}}{\partial \delta}(\delta^*)} + \norm{\frac{\partial K_0}{\partial \delta}(\delta^*)}^2 \right]
\end{aligned}
\end{equation*}

%% REMOVE SIGN IN BOCHNER INTEGRAL SIGN

Using Jensen's inequality $ \norm{\expectation_\mu\frac{\partial \tilde{K}}{\partial \delta}(\delta^*)}\le\expectation_\mu\norm{\frac{\partial \tilde{K}}{\partial \delta}(\delta^*)}$ and $\frac{\partial}{\partial \delta}\expectation \tilde{K}(\delta) = \expectation \frac{\partial}{\partial \delta}\tilde{K}(\delta)$. Since $\tilde{K}$ (see \cref{subsubsec:regularity}), $\expectation_\mu\frac{\partial \tilde{K}}{\partial \delta}(\delta^*)=\frac{\partial}{\partial \delta}\expectation_\mu\tilde{K}(\delta^*)=\frac{\partial K_0}{\partial \delta}(\delta^*)$ thus
\begin{equation*}
\begin{aligned}
\expectation_{\mu,\delta^*}\left[ L_f^2 \right] &\le \expectation_{\delta^*}\left[ \expectation_\mu \norm{\frac{\partial \tilde{K}}{\partial \delta}(\delta^*)}^2 - 2\norm{\frac{\partial K_0}{\partial \delta}(\delta^*)}^2 + \norm{\frac{\partial K_0}{\partial \delta}(\delta^*)}^2 \right] \\
&= \expectation_{\mu,\delta^*}\norm{\frac{\partial \tilde{K}}{\partial \delta}(\delta^*)}^2-\expectation_{\delta^*}\norm{\frac{\partial K_0}{\partial \delta}(\delta^*)}^2 \\
&\le \expectation_{\mu,\delta^*}\norm{\frac{\partial \tilde{K}}{\partial \delta}(\delta^*)}^2 \\
&= \expectation_{\mu,\delta^*}\norm{\frac{\partial }{\partial \delta^*}\cos\inner{\delta^*,\omega}A(\omega) }^2 \\
&= \expectation_{\mu,\delta^*}\norm{ -\omega\sin( \inner{\delta^*,\omega}) \otimes A(\omega) }^2 \\
&\le \expectation_{\mu}\left[\norm{\omega}^2\norm{A(\omega)}^2\right] \triangleq \sigma_p^2
\end{aligned}
\end{equation*}
Eventually applying Markov's inequality yields
\begin{equation}
\label{eq:Lipschitz_bound}
\probability \left\{ L_{F} \ge \frac{\epsilon}{2r} \right\} = \probability \left\{ L_{F}^2 \ge \left(\frac{\epsilon}{2r}\right)^2 \right\} \le \sigma_p^2\left(\frac{2r}{\epsilon} \right)^2.
\end{equation}

\subsection[Bounding F on a given anchor point]{Bounding $F$ on a given anchor point $\delta_i$}

To bound $\norm{F(\bfdelta_i)}_2$, Hoeffding inequality devoted to matrix concentration \cite{Mackey2014} can be applied. We prefer here to turn to tighter and refined inequalities such as Matrix Bernstein inequalities (\citet{sutherland2015} also pointed that for the scalar case).
\par
If we had bounded ORFF, we could use the following Bernstein matrix concentration inequality proposed in \cite{Ahls2002,Tropp,Koltchinskii2013remark}.

\begin{theorem}[Bounded non-commutative Bernstein concentration inequality]\label{th:bernstein1}
Verbatim from Theorem 3 of \citet{Koltchinskii2013remark},
consider a sequence $(X_{j})_{j=1}^D$ of $D$ independent Hermitian (here symmetric) $p \times p$ random matrices that satisfy $\expectation X_{j} = 0$ and suppose that for some constant $U > 0$, $\norm{X_{j}} \leq U$ for each index $j$. Denote $B_D = \norm{\expectation[X_1^2 + \ldots X_D^2]}$.
Then, for all $\epsilon \geq 0$,
\begin{equation*}
\probability\left\{\norm{\sum_{j=1}^D X_{j}} \geq \epsilon \right\} \leq p \exp\left(-\frac{\epsilon^2}{2B_D + 2U\epsilon/3}\right)
\end{equation*}
\end{theorem}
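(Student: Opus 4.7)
The strategy is the classical two-step ``$\epsilon$-net + concentration'' argument pioneered by \citet{Rahimi2007}, with two substantial upgrades needed to accommodate the operator-valued and potentially unbounded setting: (i) scalar Hoeffding/Bernstein bounds must be replaced by a matrix Bernstein inequality, and (ii) because $\|A(\omega)\|_2$ need not be bounded (as in the curl-free and divergence-free cases), the matrices must be truncated at a judiciously chosen level $\bar u_D$ and the tail handled separately using the $\psi_1$ Orlicz norm.

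\paragraph{Step 1: Regularity and differentiation under the integral.} I first verify, essentially as in the sketch of \cref{subsubsec:regularity}, that $\partial_\delta \mathbb{E}_\mu \tilde K(\delta) = \mathbb{E}_\mu \partial_\delta \tilde K(\delta)$. This is a coordinate-wise application of the dominated-convergence criterion \cref{pr:diff_under_int}, where the dominating function is controlled by $\mathbb{E}_\mu[\|\omega\|_2\,\|A(\omega)\|_2]$, which is finite because $\sigma_p^2 = \mathbb{E}_\mu[\|\omega\|_2^2 \|A(\omega)\|_2^2]$ is assumed finite (Cauchy--Schwarz). This regularity is what allows the Lipschitz constant of $F$ to be evaluated as a spectral norm of a derivative.

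\paragraph{Step 2: Epsilon-net reduction.} Let $\mathcal{D}_{\mathcal{C}} = \{x - z : x,z \in \mathcal{C}\}$; its diameter is at most $2l$, so it can be covered by $T \le (4l/r)^d$ closed balls of radius $r$ centered at anchors $\delta_1,\dots,\delta_T$. A Lipschitz argument shows that if both (a) $L_F \le \epsilon/(2r)$ and (b) $\max_i \|F(\delta_i)\|_2 \le \epsilon/2$, then $\|F\|_\infty \le \epsilon$. I bound (a) by Markov's inequality: computing $\mathbb{E}[L_F^2] \le \sigma_p^2$ as in \cref{eq:Lipschitz_bound} gives $\mathbb{P}\{L_F \ge \epsilon/(2r)\} \le \sigma_p^2 (2r/\epsilon)^2$.

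\paragraph{Step 3: Truncated matrix Bernstein for one anchor.} The hard part is bounding $\mathbb{P}\{\|F(\delta_i)\|_2 \ge \epsilon/2\}$. Write $F(\delta) = \frac{1}{D}\sum_{j=1}^D X_j$ where $X_j = \cos\langle \delta,\omega_j\rangle A(\omega_j) - K_0(\delta)$ is a centered, symmetric $p\times p$ random matrix with $\|X_j\|_2 \le \|A(\omega_j)\|_2 + \|K_0(\delta)\|_2$. Since $\|A(\omega)\|_2$ is only controlled in the $\psi_1$ Orlicz sense, I truncate at level $u$ by splitting $X_j = X_j \mathds{1}_{\|X_j\|\le u} + X_j \mathds{1}_{\|X_j\|>u}$. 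On the bounded part I invoke the non-commutative Bernstein inequality (\cref{th:bernstein1}) with the effective variance proxy $B_D \le b_D$ and the threshold $u$, yielding an exponential in $\epsilon^2 D / (b_D + u\epsilon/3)$. On the unbounded part I use the definition of $\|\cdot\|_{\psi_1}$: $\mathbb{P}\{\|X_j\|\ge u\} \le 2\exp(-u/m)$ via Markov applied to $\psi_1$, and a union bound over $j=1,\dots,D$. Choosing $u = \bar u_D = 2m \log(2^{3/2}(m/b_D)^2)$ equalizes the two error sources up to constants; the two regimes in the theorem's case split correspond to whether the linear term $u\epsilon/6$ or the quadratic term $b_D$ dominates the Bernstein denominator, equivalently to $\bar u_D \lessgtr 2(e-1)b_D/\epsilon$.

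\paragraph{Step 4: Union bound over anchors and optimization over $r$.} Combining Steps 2 and 3 through a union bound over the $T = (4l/r)^d$ anchors gives
\begin{equation*}
\mathbb{P}\{\|F\|_\infty \ge \epsilon\} \le \sigma_p^2\left(\tfrac{2r}{\epsilon}\right)^2 + p\left(\tfrac{4l}{r}\right)^d \cdot E(\epsilon, D),
\end{equation*}
where $E(\epsilon,D)$ is the appropriate Bernstein exponential from Step 3. The final step is to balance $r$: differentiating in $r$ and setting the two terms of the same order yields $r \propto (\epsilon/\sigma_p)^{d/(d+2)} \cdot (\text{constants})$, and substitution produces the prefactor $C_d (\sigma_p l/\epsilon)^{2/(1+2/d)}$ with the advertised constant $C_d = p[(d/2)^{-d/(d+2)} + (d/2)^{2/(d+2)}]\,2^{(6d+2)/(d+2)}$.

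\paragraph{Expected main obstacle.} The delicate part will be Step 3: the truncation argument, the exact choice of $\bar u_D$, and the verification that the tail probability from the unbounded excursions $\|A(\omega_j)\|>u$ does not worsen the Bernstein exponent. In particular, recovering the \emph{precise} constants in the denominators ($8(d+2)$, $(d+2)(e-1)$, the factor $\epsilon \bar u_D / 6$) requires careful bookkeeping after the optimization over $r$ in Step 4; a sloppy union bound would still give the correct qualitative shape but not the stated constants.
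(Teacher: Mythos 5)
Your proposal does not address the statement you were asked to prove. \Cref{th:bernstein1} is the bounded non-commutative Bernstein inequality itself --- a concentration inequality for a sum of independent, centered, uniformly bounded Hermitian random matrices --- which the paper imports verbatim from Theorem~3 of \citet{Koltchinskii2013remark} and does not prove. What you have outlined instead is the proof of the paper's main result, \cref{pr:ov-rff_concentration} (the $\epsilon$-net covering of $\mathcal{D}_{\bmC}$, the Lipschitz/Markov bound, the truncation at level $\bar u_D$, the union bound over anchors, and the optimization over $r$). Worse, in your Step~3 you explicitly \emph{invoke} \cref{th:bernstein1} as a tool on the truncated matrices; if the goal is to establish \cref{th:bernstein1}, this is circular, and nothing in your four steps supplies an independent derivation of the inequality $\probability\{\norm{\sum_j X_j}\ge\epsilon\}\le p\exp\bigl(-\epsilon^2/(2B_D+2U\epsilon/3)\bigr)$.

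A genuine proof of this statement would follow the matrix Laplace transform method: bound
\begin{equation*}
\probability\Bigl\{\lambda_{\max}\Bigl(\sum_{j=1}^D X_j\Bigr)\ge\epsilon\Bigr\}\le\inf_{\theta>0}e^{-\theta\epsilon}\,\expectation\,\mathrm{tr}\,\exp\Bigl(\theta\sum_{j=1}^D X_j\Bigr),
\end{equation*}
then control the trace of the exponential of the sum either via the Golden--Thompson inequality (Ahlswede--Winter) or via Lieb's concavity theorem (Tropp), reduce to bounding each matrix moment generating function by $\expectation e^{\theta X_j}\preceq\exp\bigl(g(\theta)\,\expectation X_j^2\bigr)$ with $g(\theta)=\theta^2/2\cdot(1-\theta U/3)^{-1}$ using $\expectation X_j=0$ and $\norm{X_j}\le U$, and finally optimize over $\theta$; the dimensional factor $p$ arises from $\mathrm{tr}\,e^{Y}\le p\,\norm{e^{Y}}$ and the two-sided bound from applying the argument to $\pm X_j$. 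None of these ingredients appears in your proposal. (Separately, your outline of the proof of \cref{pr:ov-rff_concentration} is broadly consistent with the paper's \cref{subsec:concentration_proof}, but that is not the theorem under review.)
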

However, to cover the general case including unbounded ORFFs like curl and div-free ORFFs, we choose a version of Bernstein matrix concentration inequality proposed in \cite{Koltchinskii2013remark} that allow to consider matrices are not uniformly bounded but have subexponential tails.
\begin{theorem}[Unbounded non-commutative Bernstein concentration inequality]
\label{th:matrix_bernstein}
Verbatim from Theorem 4 of \citet{Koltchinskii2013remark}.
Let $X_1,\hdots,X_D$ be independent Hermitian $p\times p$ random matrices, such that $\expectation X_j = 0$ for all $j=1,\hdots,D$. Let $\psi=\psi_1$. Define
\begin{equation*}
F_{(D)} \triangleq \sum_{j=1}^D X_j \quad \text{and} \quad B_D \triangleq \norm{\expectation\left[ \sum_{j=1}^D X_j^2 \right]}.
\end{equation*}
Suppose that,
\begin{equation*}
M = 2\max_{1\le j\le D}\norm{\norm{X_j}}_{\psi}
\end{equation*}
Let $\delta\in\left]0;\frac{2}{e-1}\right[$ and
\begin{equation*}
\bar{U}\triangleq M\log \left( \frac{2}{\delta}\frac{M^2}{B_D^2}+1\right)
\end{equation*}
Then, for $\epsilon\bar{U}\le(e-1)(1+\delta)B_D$,
\begin{equation}
\probability\left\{ \norm{F_{(D)}} \ge \epsilon \right\} \le 2p\exp\left( -\frac{\epsilon^2}{2(1+\delta)B_D+2\epsilon\bar{U}/3}\right)
\end{equation}
and for $\epsilon\bar{U}>(e-1)(1+\delta)B_D$,
\begin{equation}
\probability\left\{ \norm{F_{(D)}} \ge \epsilon \right\} \le 2p\exp\left( -\frac{\epsilon}{(e-1)\bar{U}} \right).
\end{equation}
\end{theorem}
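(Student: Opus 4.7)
The plan is to prove this unbounded matrix Bernstein inequality by reduction to the bounded version (\cref{th:bernstein1}) through a standard truncation argument. For a truncation level $U > 0$ to be fixed later, decompose each summand as $X_j = X_j^{\le} + X_j^{>}$, where $X_j^{\le} = X_j\mathbf{1}_{\{\norm{X_j} \le U\}}$ and $X_j^{>} = X_j - X_j^{\le}$. Introduce the centered truncated variables $Y_j = X_j^{\le} - \expectation X_j^{\le}$ and the bias $b = \norm{\sum_{j=1}^D \expectation X_j^{\le}} = \norm{\sum_{j=1}^D \expectation X_j^{>}}$ (using $\expectation X_j = 0$). A union bound then gives
\begin{equation*}
\probability\left\{ \norm{F_{(D)}} \ge \epsilon \right\} \le \probability\left\{\norm{\textstyle\sum_j Y_j} \ge \epsilon - b\right\} + \probability\{\exists j : \norm{X_j} > U\}.
\end{equation*}

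Next, I would exploit the subexponential assumption $\norm{\norm{X_j}}_{\psi_1} \le M/2$. The Orlicz-norm definition yields the tail bound $\probability\{\norm{X_j} > u\} \le 2 e^{-2u/M}$, together with truncated moment estimates $\expectation \norm{X_j^{>}} \le C M e^{-2U/M}$ and $\expectation \norm{X_j^{>}}^2 \le C M^2 e^{-2U/M}$ for an absolute constant $C$ (obtained by integration by parts against the tail). The first bound gives $\probability\{\exists j: \norm{X_j} > U\} \le 2 D e^{-2U/M}$, the second controls $b$, and the third yields a variance bound $\norm{\expectation[\sum_j Y_j^2]} \le B_D + C D M^2 e^{-2U/M}$. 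Thus, for $U$ chosen sufficiently large, the variance inflation can be kept below the factor $(1+\delta)$.

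With these controls in hand, I would apply \cref{th:bernstein1} to $\sum_j Y_j$ with uniform norm bound $\norm{Y_j} \le 2U$ and variance bound $(1+\delta) B_D$, producing
\begin{equation*}
\probability\left\{\norm{\textstyle\sum_j Y_j} \ge \epsilon \right\} \le 2 p \exp\left( -\frac{\epsilon^2}{2(1+\delta) B_D + (4U/3)\epsilon} \right).
\end{equation*}
Choosing $U = \bar{U}/2 = (M/2) \log\!\bigl(1 + (2/\delta) (M/B_D)^2\bigr)$ equates the truncation-tail contribution $2 D e^{-2U/M}$ with a Bernstein-comparable quantity (absorbed into the $2p$ prefactor) and makes $b$ negligible compared with $\epsilon$ throughout the stated parameter range. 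Substituting $U = \bar{U}/2$ gives the denominator $2(1+\delta) B_D + (2\bar{U}/3)\epsilon$.

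The two stated regimes then arise by comparing the Gaussian term $2(1+\delta) B_D$ against the Poisson term $(2\bar{U}/3)\epsilon$ in this denominator. When $\epsilon \bar{U} \le (e-1)(1+\delta) B_D$ the variance term dominates and one retains the full quadratic Bernstein bound; in the complementary regime the linear term takes over and the exponent collapses to $-\epsilon/((e-1)\bar{U})$, up to an absolute constant. The main technical obstacle is the constant bookkeeping: ensuring that the centering-induced variance inflation collapses exactly to $(1+\delta)$, verifying that the bias $b$ can indeed be charged against a vanishing fraction of $\epsilon$ uniformly in the parameter range, and confirming that the specific threshold $(e-1)(1+\delta) B_D$ separating the two regimes emerges from the explicit choice of $\bar{U}$ rather than from a wasteful use of $e^x \ge x$ somewhere in the estimate.
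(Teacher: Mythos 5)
First, note that the paper does not prove this statement at all: it is imported \emph{verbatim} as Theorem~4 of \citet{Koltchinskii2013remark}, and the paper only applies it. So there is no internal proof to compare against; your proposal is a reconstruction of Koltchinskii's argument. Your overall route --- truncate at a level $U$, center the truncated parts, apply the bounded inequality (\cref{th:bernstein1}) to the centered truncated sum, and charge the tail event and the centering bias separately --- is indeed the standard way such subexponential matrix Bernstein bounds are obtained, and the choice $U$ of order $\bar U$ is where the logarithmic factor in $\bar U$ must come from. In outline the approach is sound.

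There are, however, concrete gaps that would need to be closed before this is a proof. (i) The claim that the tail term $2D e^{-2U/M}$ is ``absorbed into the $2p$ prefactor'' does not work as stated: $D$ has nothing to do with $p$, this term does not carry the Bernstein exponential in $\epsilon$, and the target inequality has no additive remainder in which to hide it. With $U=\bar U/2$ one gets $e^{-2U/M}=\bigl(1+\tfrac{2}{\delta}\tfrac{M^2}{B_D^2}\bigr)^{-1}$, which is small in $M/B_D$ but not in $D$ or in $\epsilon$; controlling $D e^{-2U/M}$ against $2p\exp(-\epsilon^2/(2(1+\delta)B_D+2\epsilon\bar U/3))$ uniformly over the stated range of $\epsilon$ is exactly the delicate part of Koltchinskii's argument and cannot be waved through. (ii) The same issue afflicts the bias: your estimate gives $b\le CDMe^{-2U/M}$, which grows linearly in $D$, so ``$b$ is negligible compared with $\epsilon$ uniformly in the parameter range'' is an assertion, not a consequence of what you have written; one must either show $b\le\epsilon/2$ (and pay a factor in the exponent) or choose $U$ depending on $\epsilon$. (iii) In the second regime your argument yields the linear-in-$\epsilon$ exponent only ``up to an absolute constant,'' but the statement to be proved has the exact constant $(e-1)\bar U$ and the exact crossover threshold $\epsilon\bar U=(e-1)(1+\delta)B_D$; these come from the elementary inequality relating $x^2/(a+bx)$ to $x/((e-1)b)$ at the crossover point, and that step must be made explicit rather than absorbed into ``constant bookkeeping.'' None of these gaps indicates a wrong approach, but each is a place where the sketch currently asserts the conclusion rather than deriving it.
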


To use this theorem, we set: $X_j=F_j(\delta_i)$.
We have indeed: $\expectation_{\mu}[F_j(\delta_i)] = 0$ since $\tilde{K}(\delta_i)$ is the Monte-Carlo approximation of $K_0(\delta_i)$ and the matrices $F_j(\delta_i)$ are symmetric.
We assume we can bound all the Orlicz norms of the $F_j(\delta_i)=\frac{1}{D}(\tilde{K}_j(\delta_i) - K_0(\delta_i))$. Please note that in the following we use a constant $m$ such that $m=D M$,
\begin{equation*}
\begin{aligned}
m&=2D\max_{1\le j\le D}\norm{\norm{F_j(\delta_i)}}_{\psi}\\
 &\le 2\max_{1\le j\le D}\norm{\norm{\tilde{K}_j(\delta_i)}}_{\psi} + 2\norm{\norm{K_0(\delta_i)}}_{\psi}\\
 &<4\max_{1\le j\le D}\norm{\norm{A(\omega_j)}}_{\psi}+4\norm{K_0(\delta_i)}
\end{aligned}
\end{equation*}
Then $\bar{U}$ can be re-written using $m$ and $D$:
\begin{equation*}
 \bar{U}=\frac{m}{D} \log \left( \frac{2}{\delta}\frac{m^2}{b_D^2}+1\right)
\end{equation*}
We define: $\bar{u}=D\bar{U}$ and $b_D= DB_D$. Then, we get: for $\epsilon\bar{U}\le(e-1)(1+\delta)B_D$,
\begin{equation}
\probability\left\{ \norm{F(\delta_i)} \ge \epsilon \right\} \le 2p\exp\left( -\frac{D\epsilon^2}{2(1+\delta)b_D+2\epsilon\bar{u}/3}\right)
\end{equation}
and for $\epsilon\bar{U}>(e-1)(1+\delta)B_D$,
\begin{equation}
\probability\left\{ \norm{F(\delta_i)} \ge \epsilon \right\} \le 2p\exp\left( -\frac{D\epsilon}{(e-1)\bar{u}} \right).
\end{equation}

\subsection{Union bound}
Now take the union bound over the centers of the $\epsilon$-net:
\begin{equation}
\label{eq:anchor_bound}
\probability\left\{ \bigcup_{i=1}^*\norm{F(\delta_i)} \ge \frac{\epsilon}{2}\right\} \le 4Tp \begin{cases} \exp\left( -\frac{\epsilon^2D}{8\left((1+\delta)b_D+\frac{2\epsilon}{6}\bar{u}\right)}\right) & \text{if}\enskip \epsilon\bar{U}_D\le(e-1)(1+\delta)B_D \\
\exp\left( -\frac{\epsilon D}{2(e-1)\bar{u}} \right) & \text{otherwise}.
\end{cases}
\end{equation}

\subsubsection[Optimizing over r]{Optimizing over $r$}
Combining \cref{eq:anchor_bound} and \cref{eq:Lipschitz_bound} and taking $\delta=1<2/(e-1)$ yields
\begin{equation*}
\probability \left\{ \sup_{\delta\in\mathcal{D}_{\bmC}} \norm{F(\delta)} \le \epsilon \right\} \ge 1 - \kappa_1 r^{-d} - \kappa_2 r^2,
\end{equation*}
with
\begin{equation*}
\kappa_2=4\sigma_p^2\epsilon^{-2}
\quad\text{and}\quad \kappa_1 = 2p(4l)^d\begin{cases} \exp\left( -\frac{\epsilon^2 D}{16\left(b_D+\frac{\epsilon}{6}\bar{u}_D\right)}\right) & \text{if}\enskip \epsilon\bar{U}_D\le 2(e-1)B_D \\
\exp\left( -\frac{\epsilon D}{2(e-1)\bar{u}_D} \right) & \text{otherwise}.
\end{cases}
\end{equation*}
we choose $r$ such that $d\kappa_1r^{-d-1}-2\kappa_2r=0$, i.e. $r=\left(\frac{d\kappa_1}{2\kappa_2}\right)^{\frac{1}{d+2}}$. Eventually let
\begin{equation*}
C'_d=\left(\left(\frac{d}{2}\right)^{\frac{-d}{d+2}}+\left(\frac{d}{2}\right)^{\frac{2}{d+2}}\right)
\end{equation*}
the bound becomes
\begin{equation*}
\begin{aligned}
\probability \left\{ \sup_{\delta\in\mathcal{D}_{\bmC}} \norm{\tilde{F}_i(\delta)} \ge \epsilon \right\}
&\le C'_d \kappa_1^{\frac{2}{d+2}}\kappa_2^{\frac{d}{d+2}} \\
&= C'_d\left(4\sigma_p^2\epsilon^{-2}\right)^{\frac{d}{d+2}} \left(2p(4l)^d\begin{cases} \exp\left( -\frac{\epsilon^2 D}{16\left(B_D+\frac{\epsilon}{6}\bar{U}_D\right)}\right) & \text{if}\enskip \epsilon\bar{U}_D\le 2(e-1)B_D \\ \exp\left( -\frac{\epsilon D}{2(e-1)\bar{U}_D} \right) & \text{otherwise} \end{cases} \right)^{\frac{2}{d+2}} \\
&= p C'_d 2^{\frac{2+4d+2d}{d+2}}\left( \frac{\sigma_p l}{\epsilon} \right)^{\frac{2d}{d+2}} \begin{cases} \exp\left( -\frac{\epsilon^2}{8(d+2)\left(B_D+\frac{\epsilon}{6}\bar{U}_D\right)}\right) & \text{if}\enskip \epsilon\bar{U}_D\le 2(e-1)B_D \\ \exp\left( -\frac{\epsilon}{(d+2)(e-1)\bar{U}_D} \right) & \text{otherwise} \end{cases} \\
&= p C'_d 2^{\frac{6d+2}{d+2}}\left( \frac{\sigma_p l}{\epsilon} \right)^{\frac{2}{1+2/d}} \begin{cases} \exp\left( -\frac{\epsilon^2}{8(d+2)\left(B_D+\frac{\epsilon}{6}\bar{U}_D\right)}\right) & \text{if}\enskip \epsilon\bar{U}_D\le 2(e-1)B_D \\ \exp\left( -\frac{\epsilon}{(d+2)(e-1)\bar{U}_D} \right) & \text{otherwise}. \end{cases}
\end{aligned}
\end{equation*}
Conclude the proof by taking $C_d=C'_d 2^{\frac{6d+2}{d+2}}$.

\section{Application of the bounds to decomposable, curl-free, divergence-free kernels}
\label{sec:bd_application}
\begin{proposition}[Bounding the term $\bf b_D$]\label{pr:variance-bound2}
Define the random matrix $\variance_{\mu}[A(\omega)]$ as follows: $\ell,m \inrange{1}{p}$, $\variance_{\mu}[A(\omega)]_{\ell m} = \sum_{r=1}^p \covariance{(A(\omega)_{\ell r}, A(\omega)_{r m}}$. For a given $\delta=x-z$, with the previous notations
\begin{equation*}
    b_D= D\norm{\expectation_{\mu}\left[\sum_{j=1}^D \left(\tilde{F}_j(\delta)\right)^2\right]}_2,
\end{equation*}
we have:
\begin{equation*}
  b_D \le \frac{1}{2} (\norm{(K_0(2
\delta)+K_0(0))\expectation_\mu[A(\omega)] - 2K_0(\delta)^2}_2 + 2\norm{\variance_{\mu}[A(\omega)]}_{2}).
\end{equation*}
\end{proposition}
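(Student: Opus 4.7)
The plan is to use the i.i.d.\ structure of $\{\omega_j\}$ to reduce $b_D(\delta)$ to the spectral norm of a single-sample second moment, then to expand that quantity algebraically so that the two summands of the claimed bound appear after two triangle inequalities.

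First, since the $\omega_j$'s are i.i.d.\ and $F_j(\delta)=\frac{1}{D}(c_j A(\omega_j)-K_0(\delta))$ with $c_j=\cos\inner{\delta,\omega_j}$, the sum collapses as $\expectation_\mu[\sum_{j=1}^D F_j(\delta)^2]=D\,\expectation_\mu[F_1(\delta)^2]$, so $b_D(\delta)=\norm{\expectation_\mu[(cA-K_0(\delta))^2]}_2$ with $c=\cos\inner{\delta,\omega}$ and $A=A(\omega)$. Expanding the square and invoking $\expectation_\mu[cA]=K_0(\delta)$ collapses the cross terms and leaves $b_D(\delta)=\norm{\expectation_\mu[c^2A^2]-K_0(\delta)^2}_2$.

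Second, I apply the trigonometric identity $c^2=\tfrac{1}{2}(1+\cos\inner{2\delta,\omega})$, together with the two observations $K_0(0)=\expectation_\mu[A(\omega)]$ (\cref{pr:bochner} at $\delta=0$) and $\expectation_\mu[A^2]=\expectation_\mu[A]\expectation_\mu[A]+\variance_\mu[A(\omega)]$, the latter being immediate from the entrywise definition of $\variance_\mu$ since $(A^2)_{\ell m}=\sum_r A_{\ell r}A_{rm}$. Introducing the residual $R(\delta):=\expectation_\mu[\cos\inner{2\delta,\omega}A^2]-K_0(2\delta)\expectation_\mu[A]$, whose entries satisfy $R(\delta)_{\ell m}=\sum_r\covariance_\mu(\cos\inner{2\delta,\omega}A_{\ell r},A_{rm})$ by the same algebraic manipulation, the substitution yields
\begin{equation*}
\expectation_\mu[(cA-K_0(\delta))^2]=\tfrac{1}{2}(K_0(0)+K_0(2\delta))\expectation_\mu[A(\omega)]-K_0(\delta)^2+\tfrac{1}{2}\variance_\mu[A(\omega)]+\tfrac{1}{2}R(\delta).
\end{equation*}

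Third, two triangle inequalities on the spectral norm peel off the desired summands: the first produces the deterministic piece $\tfrac{1}{2}\norm{(K_0(2\delta)+K_0(0))\expectation_\mu[A(\omega)]-2K_0(\delta)^2}_2$, and the second splits the stochastic remainder into $\tfrac{1}{2}\norm{\variance_\mu[A(\omega)]}_2+\tfrac{1}{2}\norm{R(\delta)}_2$.

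The main obstacle is closing the coefficient: to recover the full factor $1$ on $\norm{\variance_\mu[A(\omega)]}_2$ demanded by the statement, one must establish $\norm{R(\delta)}_2\le\norm{\variance_\mu[A(\omega)]}_2$. I expect this to rest on $|\cos\inner{2\delta,\omega}|\le 1$ combined with a Cauchy--Schwarz bound on each covariance entry, namely $|\covariance_\mu(\cos\inner{2\delta,\omega}A_{\ell r},A_{rm})|\le\sqrt{\variance_\mu(\cos\inner{2\delta,\omega}A_{\ell r})\,\variance_\mu(A_{rm})}$ together with $\variance_\mu(\cos\inner{2\delta,\omega}A_{\ell r})\le\expectation_\mu[A_{\ell r}^2]$, followed by a Schur-type comparison of the resulting entrywise majorant against $\variance_\mu[A(\omega)]$ to pass from the $(\ell,m)$-bound to a spectral-norm bound. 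This is the only analytic step of substance; everything preceding it is algebraic bookkeeping driven by Bochner's theorem at $\delta=0$ and the definition of $\variance_\mu$ supplied by the proposition itself.
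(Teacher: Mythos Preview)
Your argument is essentially the same as the paper's: the i.i.d.\ reduction, the expansion via $\expectation_\mu[cA]=K_0(\delta)$, the identity $\cos^2=\tfrac12(1+\cos 2\cdot)$, and the split into a deterministic piece plus covariance-type residuals all appear in the paper in the same order. Your bookkeeping is in fact slightly cleaner than the paper's: you distinguish the residual $R(\delta)$ coming from the $\cos\inner{2\delta,\omega}$ term from the exact $\variance_\mu[A(\omega)]$ coming from the constant term, whereas the paper reuses the symbol $\Sigma^{\cos}$ for both; and the inequality $\norm{R(\delta)}_2\le\norm{\variance_\mu[A(\omega)]}_2$ that you correctly flag as the only substantive analytic step is precisely what the paper invokes at the very end (``using $\norm{\Sigma^{\cos}}_2\le\norm{\variance_\mu[A(\omega)]}_2$'') without further justification.
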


\begin{proof}
We fix $\delta=x-z$. For sake of simplicity, we note: $B_D=\norm{\expectation_\mu\left[F_1(\delta)^2 + \ldots + F_D(\delta)^2\right]}_2$ and we have $b_D= D B_D$, with the notations of the theorem. Then
\begin{equation*}
\begin{aligned}
B_D&= \norm{\expectation_{\mu}\left[\sum_{j=1}^D\frac{1}{D^2} (\tilde{K}_j(\delta) - K_0(\delta))^2\right]}_2\\
&= \frac{1}{D^2}\norm{\sum_{j=1}^D \expectation_{\mu}\left[\left(\tilde{K}_j(\delta)^2 - \tilde{K}_j(\delta)K_0(\delta) - K_0(\delta)\tilde{K}_j(\delta) + K_0(\delta)^2\right)\right]}_2.
\end{aligned}
\end{equation*}
As $K_0(\delta)^*=K_0(\delta)$ and $A(\omega_j)^*=A(\omega_j)$, then $\tilde{K}_j(\delta)^*=\tilde{K}_j(\delta)$, we have
\begin{equation*}
B_D = \frac{1}{D^2}\norm{\sum_j \expectation_{\mu}\left[ \tilde{K}_j(\delta)^2 - 2 \tilde{K}_j(\delta)K_0(\delta) + K_0(\delta)^2\right]}_2.
\end{equation*}
From the definition of $\tilde{K}_j$, $\expectation_{\mu}[\tilde{K}_j(\delta)] = K_0(\delta)$ which leads to
\begin{equation*}
B_D= \frac{1}{D^2} \norm{\sum_{j=1}^D\expectation_{\mu}\left(\tilde{K}_j(\delta)^2- K_0(\delta)^2\right)}_2
\end{equation*}
Now we omit the $j$ index since all vectors $\omega_j$ are identically distributed and consider a random vector $\omega \sim \mu$:
\begin{equation*}
B_D = \frac{1}{D^2} \norm{D\expectation_\mu\left[(\cos\inner{\omega,\delta})^2A(\omega)^2\right]-K_0(\delta)^2}_2
\end{equation*}
A trigonometry property gives us: $(\cos \inner{\omega,\delta})^2 = \frac{1}{2}(\cos \inner{\omega,2\delta} + \cos \inner{\omega,0})$
\begin{equation}\label{eq:variance}
\begin{aligned}
B_D&= \frac{1}{D^2}\norm{\frac{D}{2}\expectation_\mu\left[(\cos\inner{\omega,2\delta} + \cos \inner{\omega,0})A(\omega)^2\right] - K_0(\delta)^2}_2\\
&= \frac{1}{2D}\norm{\expectation_\mu\left[(\cos\inner{\omega,2\delta} + \cos \inner{\omega,0})A(\omega)^2\right] - \frac{2}{D}K_0(\delta)^2}_2\\
\end{aligned}
\end{equation}
Moreover, we write the expectation of a matrix product, coefficient by coefficient, as: $\forall \ell, m \inrange{1}{p}$,
\begin{equation*}
\begin{aligned}
\expectation_{\mu}\left[\left(\cos \inner{\omega,2\delta} A(\omega)^2\right)\right]_{\ell m} &= \sum_{r} \expectation_{\mu}\left[ \cos \inner{\omega,2\delta}A(\omega)\right]_{\ell r}\expectation_{\mu}\left[ A(\omega)\right]_{r m} + \covariance{\cos \inner{\omega,2\delta}A(\omega)_{\ell r}, A(\omega)_{r m}}\\
\expectation_\mu\left[\left(\cos \inner{\omega,2\delta} A(\omega)^2\right)\right] &= \expectation_{\mu}[\cos \inner{\omega,2\delta}A(\omega)]\expectation_{\mu}[A(\omega)] + \Sigma^{\cos}\\
&= K_0(2\delta)\expectation_{\mu}[A(\omega)] + \Sigma^{\cos}
\end{aligned}
\end{equation*}
where the random matrix $\Sigma^{\cos}$ is defined by: $\Sigma_{\ell m}^{\cos} = \sum_r \covariance{\cos \inner{\omega,2\delta}A(\omega)_{\ell r}, A(\omega)_{r m}}$. Similarly, we get:
%$\frac{D}2}\norm{\expectation_\mu\left[\left(\cos \inner{\omega,2\delta} A(\omega)^2\right)\right]}= $\\
\begin{equation*}
\begin{aligned}
\expectation_{\mu}\left[\cos \inner{\omega,0} A(\omega)^2\right]&=K_0(0)\expectation_\mu\left[A(\omega)\right] + \Sigma^{\cos}.
\end{aligned}
\end{equation*}
Hence, we have:
\begin{equation*}
\begin{aligned}
B_D &= \frac{1}{2D} \norm{(K_0(2 \delta)+K_0(0))\expectation_\mu[A(\omega)] -2K_0(\delta)^2+ 2 \Sigma^{\cos}}_2 \\
&\leq \frac{1}{2D}\left[\norm{\left(K_0(2 \delta)+K_0(0)\right)\expectation_\mu[A(\omega)] - 2 K_0(\delta)^2}_2 +2\norm{\variance_{\mu}[A(\omega)]}_2 \right],\label{eq:last-inequality}
\end{aligned}
\end{equation*}
using $\norm{\Sigma^{\cos}}_2 \leq \norm{\variance_{\mu}[A(\omega)]}_2$, where $\variance_{\mu}[A(\omega)]= \expectation_{\mu}[(A(\omega) - \expectation_{\mu}[A(\omega)])^2]$ and for all $\ell,m \inrange{1}{p}$,  $\variance_{\mu}[A(\omega)]_{\ell m} = \sum_{r=1}^p \covariance{A(\omega)_{\ell r}, A(\omega)_{r m}}$.
\end{proof}
For the three kernels of interest, we illustrate this bound in \cref{fig:variance_error}.

\begin{figure}[!ht]
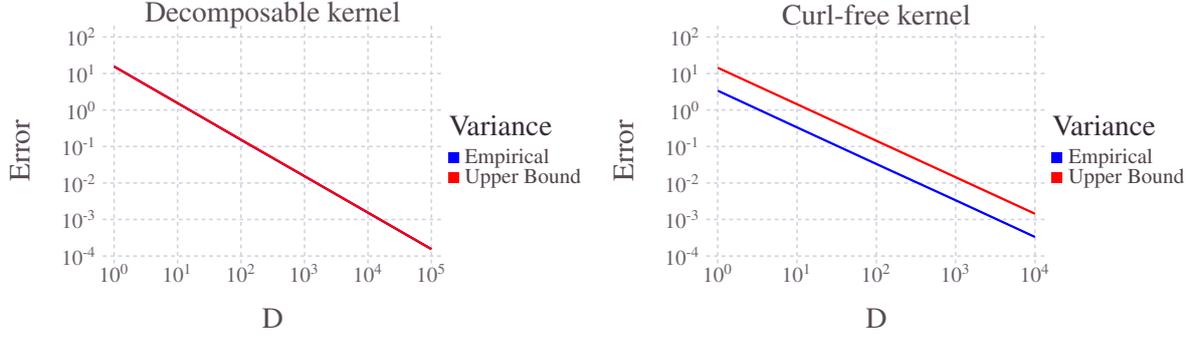

  \centering
  \begin{tabular}{cc}
  \includegraphics[width=0.47\textwidth]{Figures/variance_dec.tikz} & \includegraphics[width=0.47\textwidth]{Figures/variance_curl.tikz}
  \end{tabular}
  \caption{Upper bound on the variance of the decomposable and curl-free kernel obtained via ORFF. We generated a random point in $[-1,1]^4$ and computed the empirical variance of the estimator (blue line). We also plotted (red line) the theoretical bound proposed in \cref{pr:variance-bound2}.}
  \label{fig:variance_error}
\end{figure}

\section{Additional information and results}
\subsection{Implementation detail}
\label{subsec:implementation_detail}
 For each $\omega_j \sim \mu$, let $B(\omega_j)$ be a $p$ by $p'$ matrix such that $B(\omega_j)B(\omega_j)^*=A(\omega_j)$. In practice, making a prediction $y=h(x)$ using directly the formula $h(x)=\tilde{\Phi}(x)^*\theta$ is prohibitive. Indeed, if $\Phi(x)=\Vect_{j=1}^D\exp(-i\inner{x,\omega_j})B(\omega_j)^*B(\omega_j)$, it would cost $O(Dp'p)$ operation to make a prediction, since $\tilde{\Phi(x)}$ is a $Dp'$ by $p$ matrix.

\subsubsection{Minimizing Eq. 11 in the main paper}
Recall we want to minimize
\begin{equation}
  \label{eq:stein1}
  \theta_* = \argmin_{\theta\in\mathbb{R}^{p'D}}\norm{ \tilde{\phi}(X)^*\theta -Y}^2+\lambda\norm{\theta}^2.
\end{equation}
The idea is to replace the expensive computation of the matrix-vector product by $\tilde{\Phi}(X)^*\theta$ by a cheaper linear operator $P_x$ such that $\tilde{\Phi}(X)^*\theta=P_x\theta$. In other word, we minimize:
\begin{equation}
  \label{eq:stein2}
  \theta_* = \argmin_{\theta\in\mathbb{R}^{p'D}}\norm{ P_X\theta -Y}^2+\lambda\norm{\theta}^2.
\end{equation}
Among many possibilities of solving \cref{eq:stein2}, we focused on two types of methods:
\begin{enumerate}[i)]
    \item Gradient based methods: to solve \cref{eq:stein2} one can iterate $\theta_{t+1}=\theta_t - \eta_t(P_X^*(P_X\theta_t-y)+\lambda\theta_t)$. replace $P_X$ by $P_{x_t}$, where $x_t$ is a random sample of $X$ at iteration $T$ to perform a stochastic gradient descent.
    \item Linear methods: since the optimization problem defined in \cref{eq:stein2} is convex, one can find a solution to the first order condition, namely $\theta_*$ such that $(P_X^*P_X) \theta_*=P_X^*y$. Many Iterative solvers able to solve such linear system are available, such as \citet{sonneveld2008idr} or \citet{fong2011lsmr}.
\end{enumerate}

\subsubsection{Defining efficient linear operators}
\paragraph{Decomposable kernel}
Recall that for the decomposable kernel $K_0(\delta)=k_0(\delta)A$ where $k_0$ is a scalar shift-invariant kernel, $A(\omega_j)=A=BB^*$ and
\begin{equation*}
\begin{aligned}
    \tilde{\Phi}(x)&=\Vect_{j=1}^D\exp(-i\inner{x,\omega_j})B^* \\
          &=\tilde{\phi}(x)\otimes B^*
\end{aligned}
\end{equation*}
where $\tilde{\phi}(x)=\vect_{j=1}^D\exp(-i\inner{x,\omega_j})$ is the RFF corresponding to the scalar kernel $k_0$. Hence $h(x)$ can be rewritten in the following way
\begin{equation*}
\begin{aligned}
    h(x)&=(\tilde{\phi}(x)\otimes B^*)^*\Theta
        &=\text{vec}(\tilde{\phi}(x)^*\Theta B^*)
\end{aligned}
\end{equation*}
where $\Theta$ is a $D$ by $p'$ matrix such that $\text{vec}(\Theta)=\Theta$. Eventually we define the following linear (in $\theta$) operator
\begin{equation*}
    P^\text{dec}_x:\theta\mapsto \text{vec}(\tilde{\phi}(x)^*\Theta B^*)
\end{equation*}

Then
\begin{equation*}
    h(x)=P^\text{dec}_x \theta=\tilde{\Phi}(x)^*\theta
\end{equation*}
Using this formulation, it only costs $O(Dp'+p'p)$ operations to make a prediction. If $B=I_d$ it reduces to $O(Dp)$. Moreover this formulation cuts down memory consumption from $O(Dp'p)$ to $O(D+p'p)$.

\paragraph{Curl-free kernel}
For the Gaussian curl-free kernel we have, $K_0(\delta)=-\nabla\nabla^Tk_0(\delta)$ and the associated feature map is $\Phi(x)=\Vect_{j=1}^D\exp(-i\inner{x,\omega_j})\omega_j^*$. In the same spirit we can define a linear operator
\begin{equation*}
    P^\text{curl}_x:\theta\mapsto \text{vec}\left(\sum_{j=1}^D\tilde{\phi}(x)_j^*\Theta_j \omega_j\right),
\end{equation*}
such that $h(x)=P^\text{curl}_x \theta=\tilde{\Phi}(x)^*\theta$. Here the computation time for a prediction is $O(Dp)$ and uses $O(D)$ memory.

\paragraph{Div-free kernel}
For the Gaussian divergence-free kernel, $K_0(\delta)=(\nabla\nabla^T-I\Delta)k_0(\delta)$ and $\Phi(x)=\Vect_{j=1}^D\exp(-i\inner{x,\omega_j})(I-\omega_j^*\omega_j)^{1/2}$. Hence, we can define a linear operator
\begin{equation*}
    P^\text{div}_x:\theta\mapsto \text{vec}\left(\sum_{j=1}^D\tilde{\phi}(x)_j^*\Theta_j (I_d-\omega_j\omega_j^*)^{1/2}\right),
\end{equation*}
such that $h(x)=P^\text{div}_x \theta=\tilde{\Phi}(x)^*\theta$. Here the computation time for a prediction is $O(Dp^2)$ and uses $O(Dp^2)$ memory.

% \subsection{Gradient descent algorithm}
% One needs to compute the gradient of the ridge optimization problem for one or many data points, at each iteration. Let $\tilde{\phi}^{(j)}$ be the $j$-th component of the scalar feature map $\tilde{\phi}(x)$ ($\tilde{\phi}(x)$ is a vector of size $D$) associated to the operator-valued kernel. Moreover, let $\Theta^{(j)}$ be the $j$-th column of the matrix $\Theta$, then,

% \begin{equation*}
%   \Theta_{t+1} \gets \Theta_{t} - \eta_t \left(\Vect_{j=1}^D\left(\frac{1}{N}\sum_{i=1}^N\left(B(\omega_j)^*(\tilde{f}_{\Theta_t}(x_i) - y_i)\tilde{\phi}^{(j)}(x_i)\right) + \lambda\Theta^{(j)}_{t}\right)^*\right)^*
% \end{equation*}

\begin{table}[ht]
    \centering
    \resizebox{0.9\textwidth}{!} {
    \begin{tabular}{cccc} \hline
        Feature map & $P_x:\theta\mapsto$ & $P_x^*:y\mapsto$ & $P_x^*P_x:\theta\mapsto$ \\ \hline \\
        $\Phi^{dec}(x)$ & $ \text{vec}(\tilde{\phi}(x)^*\Theta B^*)$ & $\text{vec}(\tilde{\phi}(x)y^*B)$ & $ \text{vec}(\tilde{\phi}(x)\tilde{\phi}(x)^*\Theta B^*B)$\\

        $\Phi^{curl}(x)$ & $\text{vec}\left(\sum_{j=1}^D\tilde{\phi}(x)_j^*\Theta_j \omega_j\right)$ & $\vect_{j=1}^D\tilde{\phi}(x)_jy^*\omega_j$ & $\text{vec}\left(\tilde{\phi}(x)\tilde{\phi}(x)^* \left(\vect_{j=1}^D\Theta_j\norm{\omega_j}^2\right)\right)$ \\

        $\Phi^{div}(x)$ & $\text{vec}\left(\sum_{j=1}^D\tilde{\phi}(x)_j^*\Theta_j (I_d-\omega_j\omega_j^*)^{1/2}\right)$ & $\vect_{j=1}^D\tilde{\phi}(x)_jy^*(I_d-\omega_j\omega_j^*)^{1/2}$ & $\text{vec}\left(\tilde{\phi}(x)\tilde{\phi}(x)^* \left(\vect_{j=1}^D\Theta_j(I_d-\omega_j\omega_j^*)\right)\right)$ \\ \hline
    \end{tabular}
    }
    \caption{fast-operator for different Feature maps.}
    \label{tb:fast-op}
\end{table}

\begin{table}[ht]
    \centering
    \begin{tabular}{cccc} \hline
        Feature map & $P_x$ & $P_x^*$ & $P_x^*P_x$ \\ \hline \\
        $\Phi^{dec}(x)$ & $O(D+pp')$ & $O(Dp'+pp')$ & $O(D^2+Dp'^2)$\\

        $\Phi^{curl}(x)$ & $O(Dp)$ & $O(Dp)$ & $O(D^2p+Dp)$\\

        $\Phi^{div}(x)$ & $O(Dp^2)$ &  $O(Dp^2)$ & $O(D^2p+Dp^2)$ \\ \hline
    \end{tabular}
    \caption{Time complexity to compute different Feature maps with fast-operators (for one point $x$).}
    \label{Optimized operator for different Feature maps.}
\end{table}

% \footenote{This complexity takes into account the time to compute the matrix square root of $A(\omega_j)$. It is possible to pre-compute these quantities if there is enough memory.}
% \footenote{Here $d=p$.}

% Table \cref{t:cx} gives the time and memory complexity to compute the gradient on one datapoint. For the divergence free kernel one need to compute $A(\omega_j)=B(\omega_j)B(\omega_j)^*$ for each $\omega_j$, $j\in \{1,\hdots,D\}$. This decomposition can be done when computing the gradient ($\Phi_{div}$ line of \cref{t:cx}) or when initializing the ORFF ($\Phi^{div}$ (SVD) line of \cref{t:cx}).
% \begin{table}[ht]
% \caption{Complexity in time and memory of ORFF methodology}
% \label{t:cx}
% \begin{center}
% \begin{tabular}{cccc}
% \hline
% OVK & \shortstack{Time \\ (sampling)} & \shortstack{Time \\ (gradient computation, N=1)} & \shortstack{Memory \\ (gradient computation, N=1)} \\ \hline
% $\Phi^{dec}$, $B=I_p$ & $\Omega(dD)$ & $\Omega((d+p)D)$ & $\Omega(dD)$ \\
% $\Phi^{dec}$, $B \neq I_p$ & $\Omega(dD)$ & $\Omega((d+p^2)D)$ & $\Omega(dD)$ \\
% $\Phi^{curl}$ & $\Omega(dD)$ & $\Omega(dD)$ & $\Omega(dD)$ \\
% $\Phi^{div}$ & $\Omega(dD)$ & $\Omega(d^3D)$ & $\Omega(dD)$ \\
% $\Phi^{div}$ (SVD) & $\Omega(d^2D)$ & $\Omega(d^2D)$ & $\Omega(d^2D)$ \\
% \hline
% \end{tabular}
% \end{center}
% \end{table}

\subsection{Simulated dataset}
\label{sec:more_simulation}
\begin{figure}[H]
\centering
\begin{tabular}{cc}
\includegraphics[trim=1.8cm 1cm 2cm 1cm,width=.45\textwidth,clip=true]{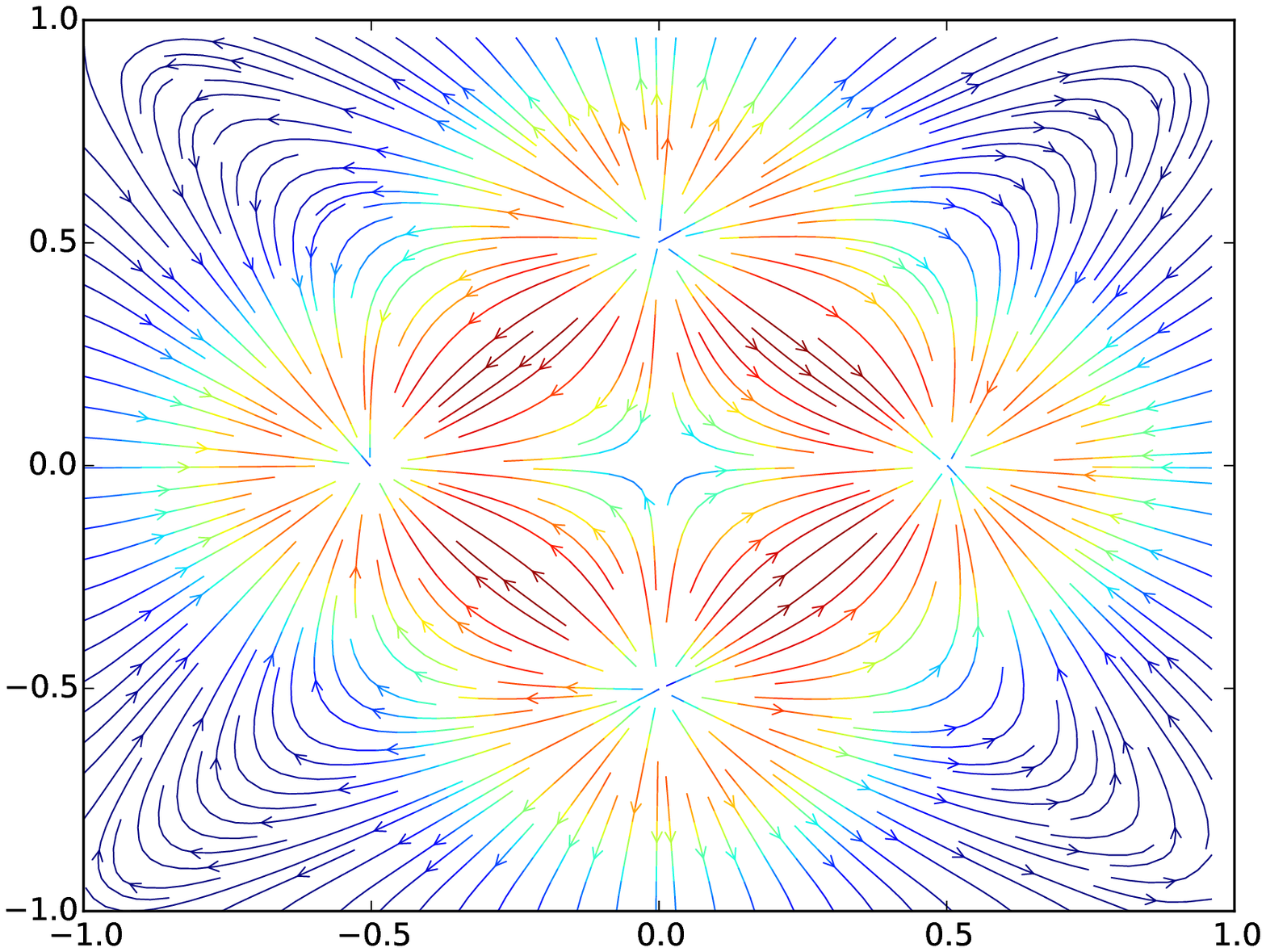} &
\includegraphics[trim=1.8cm 1cm 2cm 1cm,width=.45\textwidth,clip=true]{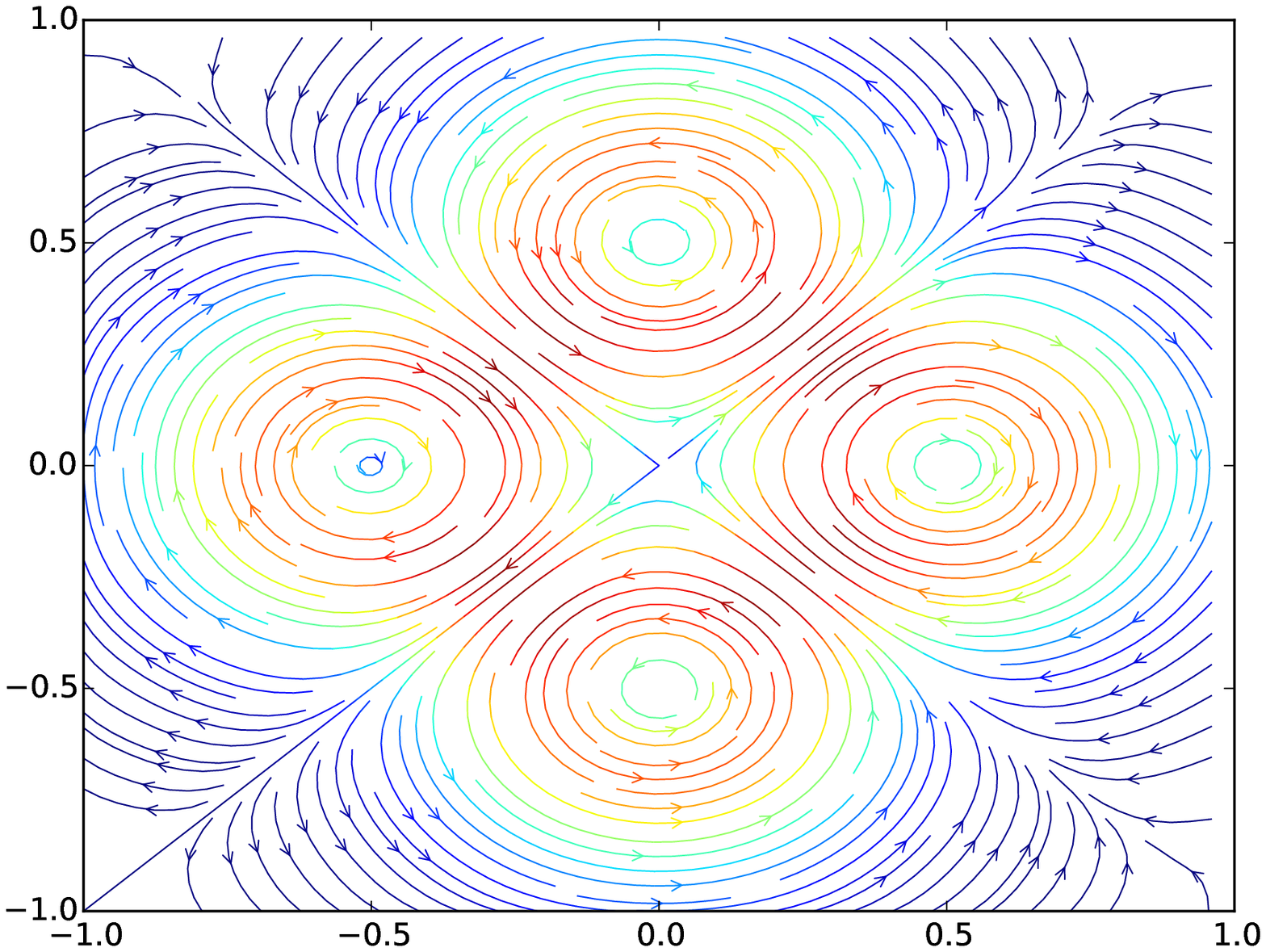} \\
Curl-free & Divergence-free
\end{tabular}
\caption{Synthetic data used to train the curl-free and divergence free ORFF and OVK.}
\label{fig:curl-div-fields}
\end{figure}

We also tested our approach when the output data are corrupted with a Gaussian noise with arbitrary covariance. Operator-valued kernels-based models as well as their approximations are in this case more appropriate than independent scalar-valued models. We generated a dataset $\mathcal{A}_{\text{dec}}^N=(\mathcal{X},\mathcal{Y})$ adapted to the decomposable kernel with $N$ points $x_i\in\mathcal{X}\subset\mathbb{R}^{20}$ to $y_i\in\mathcal{Y}\subset\mathbb{R}^{20}$, where the outputs have a low-rank.
The inputs where drawn randomly from a uniform distribution over the hypercube $\mathcal{X}=[-1;1]^{20}$. To generate the outputs we constructed an ORFF model from a decomposable kernel $K_0(\bfdelta)=Ak_0(\bfdelta)$, where $A$ is a random positive semi-definite matrix of size $20\times 20$, rank $1$ and $\norm{A}_2=1$ and $k_0$ is a Gaussian kernel with hyperparameter $\gamma=1/(2\sigma^2)$.
We choose $\sigma$ to be the median of the pairwise distances over all points of $\mathcal{X}$ (Jaakkola's heuristic \cite{jaakkola1999using}).
Then we generate a parameter vector for the model $\theta$ by drawing independent uniform random variable in $[-1;1]$ and generate $N$ outputs $y_i=\tilde{\Phi}_D(x_i), x_i \in \mathcal{X}, y_i \in\mathcal{Y}, i\in\{1\hdots N\}$.
We chose $D=10^4$ relatively large to avoid introducing too much noise due to the random sampling of the Fourier Features.
We compare the exact kernel method OVK with its ORFF approximation on the dataset $\mathcal{A}^{n}_{\text{dec}}$ with additive non-isotropic Gaussian noise: $y^{\text{noise}}_i=\tilde{\Phi}_D(x_i)+\epsilon_i$ where $\epsilon_i\sim\mathcal{N}(0,\Sigma)$ and $\norm{\Sigma}_2=\sqrt{\variance[y_i]}$.
We call the noisy dataset $\mathcal{A}^{N}_{\text{dec,noise}}=(\mathcal{X},\mathcal{Y}_{\text{noise}}){\text{dec,noise}})$. The results are given in \cref{tab:RMSE_ovk_vs_orff}, where the reported error is the root mean-squared error.
\begin{table}[!ht]
\centering
\resizebox{0.9\textwidth}{!} {
\begin{tabular}{ccccc}
N & ORFF & OVK & ORFF NOISE & OVK NOISE \\ \hline \\
 $10^2$ & $9.21\cdot10^{-2}\pm 4\cdot10^{-3}$ & $4.36\cdot10^{-2}\pm 7\cdot10^{-3}$ & $1.312\cdot10^{-1}\pm 1\cdot10^{-2}$& $1.222\cdot10^{-1}\pm 9\cdot10^{-2}$\\
 $10^3$ & $5.97\cdot10^{-2}\pm 2\cdot10^{-4}$ & $2.13\cdot10^{-2}\pm 8\cdot10^{-4}$ & $1.085\cdot10^{-1}\pm 2\cdot10^{-2}$ & $0.990\cdot10^{-1}\pm 4\cdot10^{-2}$\\
 $10^4$ & $3.56\cdot10^{-2}\pm 5\cdot10^{-4}$ & $1.01\cdot10^{-2}\pm 1\cdot10^{-4}$ & $.876\cdot10^{-1}\pm 3\cdot10^{-3}$ & $0.825\cdot10^{-1}\pm 2\cdot10^{-3}$\\
 $10^5$ & $2.89\cdot10^{-2}\pm 7\cdot10^{-4}$ & $N/A$ & $.717\cdot10^{-1}\pm 3\cdot10^{-3}$ & $N/A$\\
\end{tabular} }
\caption{RMSE, average of 10 runs on synthetic data.}
\label{tab:RMSE_ovk_vs_orff}
\end{table}

\end{document}